\newtheorem{theorem}{Theorem}[section]
\begin{document}

\title{Adaptive aggregation of Monte Carlo augmented decomposed filters for efficient group-equivariant convolutional neural network}

\author{Wenzhao Zhao,
	\and
	Barbara D. Wichtmann, \and
	Steffen Albert, \and
	Angelika Maurer, \and
	Frank G. Zöllner \and    
	and Jürgen W. Hesser
	
\thanks{Wenzhao Zhao is with School of Computer and Artificial Intelligence, Nanjing University of Finance and Economics, and Interdisciplinary Center for Scientific Computing, Mannheim Institute for Intelligent Systems in Medicine, Medical Faculty Mannheim, Heidelberg University. 
	Barbara D. Wichtmann is with Clinic of Neuroradiology, University Hospital Bonn, and German Center for Neurodegenerative Diseases (DZNE).
	Angelika Maurer is with Department of Diagnostic and Interventional Radiology, University Hospital Bonn.
	Steffen Albert and Frank G. Zöllner are with Computer Assisted Clinical Medicine, Mannheim Institute for Intelligent Systems in Medicine, Medical Faculty Mannheim, Heidelberg University.
	Jürgen W. Hesser is with Interdisciplinary Center for Scientific Computing,
	Central Institute for Computer Engineering,
	CSZ Heidelberg Center for Model-Based AI, Data Analysis and Modeling in Medicine, Mannheim Institute for Intelligent Systems in Medicine, Medical Faculty Mannheim, Heidelberg University.  E-mail: zhaowenzhaoyz@163.com.
}
\thanks{Manuscript received April 19, 2021; revised August 16, 2021.}}

\markboth{Journal of \LaTeX\ Class Files,~Vol.~14, No.~8, August~2021}%
{Shell \MakeLowercase{\textit{et al.}}: A Sample Article Using IEEEtran.cls for IEEE Journals}


\maketitle

\begin{abstract}
Group-equivariant convolutional neural networks (G-CNN) heavily rely on parameter sharing to increase CNN's data efficiency and performance. However, the parameter-sharing strategy greatly increases the computational burden for each added parameter, which hampers its application to deep neural network models. In this paper, we address these problems by proposing a non-parameter-sharing approach for group equivariant neural networks. The proposed methods adaptively aggregate a diverse range of filters by a weighted sum of stochastically augmented decomposed filters. We give theoretical proof about how the group equivariance can be achieved by our methods. Our method applies to both continuous and discrete groups, where the augmentation is implemented using Monte Carlo sampling and bootstrap resampling, respectively. 
Our methods also serve as an efficient extension of standard CNN. 
The experiments show that our method outperforms parameter-sharing group equivariant networks and enhances the performance of standard CNNs in image classification and denoising tasks, by using suitable filter bases to build efficient lightweight networks. 
The code is available at \url{https://github.com/ZhaoWenzhao/MCG_CNN}.

\end{abstract}

\begin{IEEEkeywords}
Group equivariance, non-parameter-sharing, convolutional neural network, Monte Carlo sampling, filter decomposition.
\end{IEEEkeywords}

\section{Introduction}
\label{sec:intro}
\IEEEPARstart{E}{quivariance} or invariance under some transformation of input is a desired characteristic for many applications including computer vision tasks such as image classification and image denoising.
This is due to the widespread existence of transformations such as affine transformations in natural images as shown in Fig. \ref{fig:shear_cbsd}. For image classification, one may expect the classification algorithms to recognize the objects in the image regardless of any affine transformation, which is called "invariance". For image denoising, we expect the denoiser to process transformed input images and to output a corresponding transformed outputs, which is called "equivariance". Invariance can be considered a special case of "equivariance". We see that an affine equivariant deep learning model can have a good generalization and robustness against the affine transformation of the input image data. The deep learning methods that boost equivariance can therefore improve data efficiency of deep learning.
To achieve equivariant deep learning, researchers generally adopt two different approaches: data augmentation\cite{wang2022data,quiroga2020revisiting}, and group-equivariant network architectures\cite{kondor2018generalization}. Data augmentation is an efficient and popular method for enhancing a given model's group equivariance. However, there is no guarantee for group equivariance with respect to unseen data. 

\begin{figure}[t]
	\centering
	\includegraphics[width=0.7\linewidth]{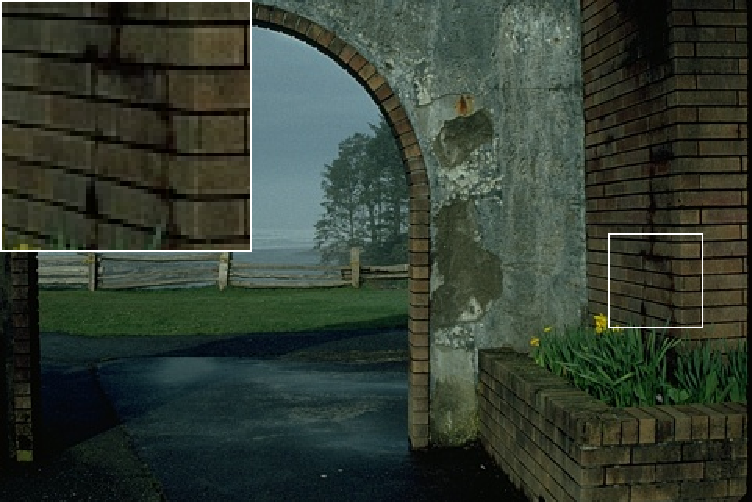}		
	\caption{ An example of affine transformation in real life. The image is from the CBSD432\cite{martin2001database} dataset. As shown in the white rectangular box, the horizontal lines of bricks undergo a shear transform along the vertical direction.}
	\label{fig:shear_cbsd}
\end{figure}

Group equivariant neural networks are another way to enhance group equivariant deep learning by focusing on the neural network's architecture itself.
Convolutional neural networks (CNNs), one of the most widespread deep neural network architectures in computer vision, show a desirable property of translation equivariance due to its "sliding window" strategy inspired by human vision\cite{fukushima1980neocognitron,lecun1989backpropagation}. In recent years, a sheer amount of publications have emerged aiming at developing and applying more advanced group equivariant CNNs to improve CNN's sample efficiency and generalizability\cite{kruger2023equivariant,lyle2019analysis,he2021efficient}. The concept of group equivariant CNN (G-CNN) was first proposed by Cohen and Welling in \cite{cohen2016group}, which exploited a higher degree of weight sharing by increasing the number of convolutional channels with the periodical rotation of the same convolutional kernel. This idea was further extended in \cite{cohen2016steerable} by introducing steerable filters which decomposed the convolutional kernel with an orthogonal basis of roto-reflection groups.

Following the work of rotation equivariant CNN, in recent years, there have been a lot of studies based on filter decomposition for exploring scale equivariant CNN\cite{sosnovik2019scale,sosnovik2021disco,sangalli2021scale,zhu2022scaling}, 
scale-rotation equivariant CNN\cite{gao2021deformation,he2021efficient,QIAO2025106980}, rigid transformation equivariant CNN\cite{jenner2022steerable}, and affine invaraint CNN\cite{shen_efficient_2024}. 
Attention mechanisms have been introduced in \cite{romero2020attentive,he2021efficient} to help better identify optimal filter banks and boost equivariance performance. In \cite{Wu_Liu_Sun_Yang_Dong_Lin_Tang_Mi_Jin_Wei_2025}, the concept of group bias is proposed to improve the performance of rotation equivariant CNN. The idea of group equivariance has also been introduced to transformer networks to improve the transformer's data efficiency.  Apart from filter decomposition, the feature alignment has also proven to be helpful for improving CNN's group equivariance against affine image transforms\cite{sunempowering}.

The existing works for filter-decomposition-based group equivariant CNN all require increasing channel numbers to increase parameter-sharing degree. This is because the existing works are based on group convolution\cite{kondor2018generalization}\cite{cohen2019general} and performing group convolution needs lifting or mapping the image data to the transformation group space, where additional dimensions are introduced and the integration or average along the additional dimensions is handled by the same learnable parameters (causing a higher parameter-sharing degree).  
This integration along the additional dimensions brings in a heavy computational burden\cite{kruger2023equivariant} and it causes an imbalance between the number of computational operations and the number of learnable parameters, which hence hampers their practical application to complex neural network architectures. 
Due to the computational burden needed for considering one kind of transform equivariance, the existing works of affine G-CNN are limited to transforms such as scaling, rotation, and reflection. So far, further including the shear transform is rarely considered in the conventional framework of affine G-CNN.

In addition, it has been shown that neural networks with greater depth and a larger number of parameters usually have better generalization performance\cite{yang2020rethinking,nakkiran2021deep}. The heavy computational burden of a single group equivariant layer makes it difficult to apply parameter-sharing G-CNN to large neural network models. In this work, we show that the proposed efficient non-parameter-sharing G-CNNs can achieve superior performance to parameter-sharing G-CNNs when combined with advanced neural network architectures.

In this paper, we propose an efficient implementation of non-parameter-sharing G-CNNs based on an adaptive aggregation of Monte Carlo augmented decomposed filters. The contribution of this paper is embodied in four aspects:
\begin{itemize}
	\item We propose an efficient non-parameter-sharing group equivariant network, where no additional channels or dimensions are introduced in implementing group convolution. Our method serves as an efficient extension of standard CNN. We give theoretical proof of how the group equivariance is achieved with conventional neural network training.  
	\item Thanks to the convenience of weighted Monte Carlo (MC) sampling in implementation, our work can consider a more flexible mix of different simple transforms, we thereby introduce shear transform for affine G-CNN and demonstrate its potential to improve G-CNNs' performance on natural images.
	\item Our non-parameter-sharing G-CNNs achieve superior performance to parameter-sharing-based G-CNNs when combined with advanced neural network architectures. Our approach does not increase the computation burden and achieves high parameter and data efficiency compared with standard CNNs. 
	\item With a set of suitable filter bases, the proposed networks serve as promising alternatives to standard CNNs for both image classification and image denoising tasks. Compared with standard CNNs, the proposed methods are good at exploiting large convolutional kernels efficiently, which helps build an efficient lightweight image-denoising network.
\end{itemize}

The paper is organized as follows: In the Methods section, we review the general framework of the group-equivariant model and introduce the details of our approach. We show the experimental results and discussions in the Experiments section and conclude the paper in the Conclusion section.

\section{Methods}
\label{sec:method}
\subsection{The general framework of group-equivariant model}
Group\cite{dummit2004abstract} is a classical mathematical concept, which is defined to be a set with a corresponding binary operation that is associative, contains an identity element, and has an inverse element for each element. In this paper, all the discussed groups are assumed to be locally compact groups. 
	Following \cite{kondor2018generalization}, we will briefly introduce the definition of group equivariant mapping and group convolution. 

\subsubsection{Group equivariance }

In this paper, we consider a group $G$ for the affine transformations on 2D images $\mathbb{R}^2$, which can be written as $G=\mathbb{R}^2\rtimes \mathcal{A}$, a semidirect product between the translation group $\mathbb{R}^2$ and another affine transform group $\mathcal{A}$ (whose group element for 2D images takes the representation of a $2\times 2$ matrix). Its group product rule is defined as 
\begin{equation}
	\begin{array}{l}
		g_1\bullet g_2 = (x_1,a_1)\bullet (x_2,a_2)\\
		=(x_1+M(a_1) x_2, a_1+a_2),
	\end{array}
	\label{eq:product}
\end{equation}
where "$\bullet$" denotes the group product operator, $g_1 = (x_1,a_1)$, $g_2 = (x_2,a_2)$ with $x_1,x_2\in \mathbb{R}^2$, $a_1,a_2\in \mathbb{R}^4$, and function $M: \mathbb{R}^4\rightarrow \mathcal{A}$. In this paper, we consider the following affine group, in particular, for any $a = (\alpha, \sigma, s, r)$ with $\alpha, \sigma, s \in \mathbb{R}$, $M(a) = R(\theta)A(\alpha)S_1(s)S_2(r)$, where
\begin{equation}
	S_1(s) = 
	\begin{bmatrix}
		1 & s\\
		0 & 1
	\end{bmatrix},
	\label{eq:shear}
\end{equation}

\begin{equation}
	S_2(r) = 
	\begin{bmatrix}
		1 & 0\\
		r & 1
	\end{bmatrix},
	\label{eq:shear2}
\end{equation}

\begin{equation}
	A(\alpha) = 
	\begin{bmatrix}
		2^{\alpha} & 0\\
		0 & 2^{\alpha}
	\end{bmatrix},
	\label{eq:scaling}
\end{equation}	

\begin{equation}
	R(\theta) = 
	\begin{bmatrix}
		\cos{\theta} & \sin{\theta}\\
		-\sin{\theta} & \cos{\theta}
	\end{bmatrix}.
	\label{eq:rotation}
\end{equation}	
It should be noted that the existing works on affine G-CNN only consider translation, scaling, rotation, and mirror transforms. In this work, shear transform is included to form a more general case and explore its potential for boosting G-CNN's performance on natural images.

For a group element of the affine transformation group $g\in G$, there is a corresponding group action on an index set $\mathcal{X}$, i.e., a transformation $T: G\times \mathcal{X}\rightarrow \mathcal{X}$ for the index set. And for any $g_1, g_2 \in G$ and $x\in \mathcal{X}$, we have
\begin{equation}
	T(g_1\bullet g_2,x) = T(g_1,T(g_2,x)).
	\label{eq:t_product}
\end{equation}	
The corresponding transformation $\mathbb{T}_g$ for any function $f: \mathcal{X}\rightarrow \mathbb{C}$ can be further defined as $\mathbb{T}_g: f\rightarrow f^\prime$ where $f^\prime (T(g,x))=f(x)$.

With the concept of group and group actions, we can now define the group equivariant map. 
Suppose we have a function $f:\mathcal{X}\rightarrow V$ to be the input image or feature map of a neural network layer with $V$ as a vector space.
Let $L_V(\mathcal{X})$ denote the Banach space of functions ${f:\mathcal{X}\rightarrow V}$.
Consider a map $\phi: L_{V_1}(\mathcal{X}_1)\rightarrow L_{V_2}(\mathcal{X}_2)$ between two function spaces $L_{V_1}(\mathcal{X}_1):\{f: \mathcal{X}_1\rightarrow V_1\}$ and $L_{V_2}(\mathcal{X}_2):\{f: \mathcal{X}_2\rightarrow V_2\}$.
For $g\in G$, we have $T_g$ and $T^{\prime}_g$ to be G actions corresponding to set $\mathcal{X}_1$ and $\mathcal{X}_2$, as well as $\mathbb{T}_g$ and $\mathbb{T}^{\prime}_g$.
The map $\phi$ is group equivariant if and only if 
\begin{equation}
	\forall g\in G, \phi(\mathbb{T}_g(f))=\mathbb{T}^{\prime}_g(\phi(f)) 
	\label{eq:equivariant}
\end{equation}	 

\subsubsection{Group convolution}

A standard convolution of a function $f$ with a kernel $\psi$$: \mathbb{R}\rightarrow \mathbb{R}$, is a translation-equivariant map, which can be written as
\begin{equation}
	(\psi*f)(x) = \int \psi(-x+x^\prime)f(x^\prime)dx^\prime,
	\label{eq:conv_continuous}
\end{equation}

Group convolution is a generalization of standard convolution by introducing the group operation. 
The group convolution \cite{kondor2018generalization}\cite{cohen2019general}\cite{bekkers2019b}\cite{he2021efficient} on a compact group $G$ at group element $g$ is written as
\begin{equation}
	(\psi*f)(g)=\int_{G}\psi(g^{-1}\bullet g^\prime)f(g^\prime)d\mu (g^\prime)
	\label{eq:equivariant_conv}
\end{equation}	
where $\mu$ is the Haar measure, and $f,\psi: G\rightarrow \mathbb{C}$.
It should be noted that plain convolution is a special case of group convolution when only the translation group is considered (i.e., $g^{-1} = -x$; $g'=x'$ and the "$\bullet$" corresponds to "$+$").
\cite{kondor2018generalization} proved that the group convolution defined in equation \eqref{eq:equivariant_conv} is a group-equivariant map.
The group convolution shows favorable group equivariance in lots of research works\cite{gao2021deformation,sosnovik2019scale}. 
The group convolution can be considered to be a mapping that uses a transformation-level averaging procedure to enhance the transformation equivariance or invariance of the mapping. Therefore, in this paper, we assume the general effectiveness of the group convolution operator for any kind of geometric group and its common subsets.

\subsection{Adaptive aggregation of Monte Carlo augmented decomposed filters}

In a discrete implementation of group convolution, the numerical integration is usually implemented based on the trapezoidal rule\cite{atkinson1991introduction} using evenly sampled group elements $g^\prime$ in equation \eqref{eq:equivariant_conv}. 
For each input feature map channel (when considering many different kinds of affine transforms such as scaling, rotation, and mirror), nested integrals are needed, i.e. one nested integral per transform is considered. By this, the approach increases the computation burden exponentially with the number of considered transforms, which leads to the curse of dimensionality\cite{weinzierl2000introduction}. For example, when we have $m$ different elements per transform and $n$ transforms, this amounts to $m^n$ terms to be evaluated.

To improve the flexibility of group convolution for the general affine transform group and avoid the curse of dimensionality, in this work, we propose to approximate the multi-dimensional integral over group operations in the group convolution by MC integration.

\subsubsection{Monte Carlo integration}

MC integration is known to tackle high-dimensional integration with robust convergence independent of the number of dimensions\cite{weinzierl2000introduction}. We consider for brevity only the standard MC variant, being aware that more efficient schemes such as Quasi-MC have the potential to substantially increase the performance further\cite{caflisch1998monte,lepage1978new}.

For a multi-dimensional Monte Carlo integral, we have the theorem \cite{przybylowicz2022foundations,kong2003theory,kiria2016calculation} as follows,
\begin{theorem}
	Let $\mu_p$ be a probabilistic measure on $(\mathbb{R}^d,\mathcal{B}(\mathbb{R}^d))$, i.e., $\mu_p(\mathbb{R}^d)=1$, and $\mathcal{B}(\mathbb{R}^d)$ denotes the Borel algebra on $\mathbb{R}^d$ with $d$ the number of dimensions. For $f\in L^2(\mathbb{R}^d,\mathcal{B}(\mathbb{R}^d),\mu_p)$, we define 
	\begin{equation}
		I(f) = \int_{\mathbb{R}^d}f(x)d\mu_p(x), 
		\label{eq:mcintegration}
	\end{equation}
	and
	\begin{equation}
		Q_N(f) = \frac{1}{N}\sum_{i=1}^{N}f(\xi_i), 
		\label{eq:mc}
	\end{equation}
	where $(\xi_i)_{i\in N}$ is an i.i.d sequence of random variables with distributions $\mu_p$.
	We have $Q_N(f)\rightarrow I(f)$ when $N\rightarrow +\infty$. For all $N\in \mathbb{N}$, there is 
	\begin{equation}
		(\mathbb{E}\|I(f)-Q_N(f)\|^2)^{1/2} =\sigma(f)/\sqrt{N},
		\label{eq:mcerror}
	\end{equation}
	where $\sigma^2(f)=I(f^2)-(I(f))^2$, and $\|\cdot \|$ is the $l^2$ norm.
	
	\label{theorem:mc-int}
\end{theorem}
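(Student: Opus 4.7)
The plan is to reduce the statement to two standard probabilistic facts: the unbiasedness of the empirical mean and the scaling of the variance of a sum of i.i.d.\ random variables. First I would observe that since $\mu_p$ is a probability measure, $L^2(\mathbb{R}^d,\mathcal{B}(\mathbb{R}^d),\mu_p) \subset L^1(\mathbb{R}^d,\mathcal{B}(\mathbb{R}^d),\mu_p)$ by Cauchy--Schwarz (i.e., $\int |f|\, d\mu_p \le (\int |f|^2\, d\mu_p)^{1/2}$), so $I(f)$ and $I(f^2)$ are both finite, and each $f(\xi_i)$ has finite mean and variance with $\mathbb{E}[f(\xi_i)] = I(f)$ and $\mathrm{Var}(f(\xi_i)) = I(f^2) - I(f)^2 = \sigma^2(f)$.

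Next I would establish the error identity \eqref{eq:mcerror}. By linearity, $\mathbb{E}[Q_N(f)] = \tfrac{1}{N}\sum_{i=1}^{N} \mathbb{E}[f(\xi_i)] = I(f)$, so the mean squared error equals the variance of $Q_N(f)$. Since the $\xi_i$ are i.i.d., the variances add:
\begin{equation}
\mathbb{E}\|I(f)-Q_N(f)\|^2 = \mathrm{Var}(Q_N(f)) = \frac{1}{N^2}\sum_{i=1}^{N}\mathrm{Var}(f(\xi_i)) = \frac{\sigma^2(f)}{N}.
\end{equation}
Taking square roots yields \eqref{eq:mcerror}.

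Finally, the convergence $Q_N(f)\to I(f)$ follows immediately: the $L^2$ version is a direct corollary of \eqref{eq:mcerror} since $\sigma^2(f)/N \to 0$, and the almost-sure version is Kolmogorov's strong law of large numbers applied to the i.i.d.\ sequence $f(\xi_i)$ with finite mean. I would present the $L^2$ route, as it is self-contained from the variance computation already performed and matches the norm used in the statement.

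The main obstacle is essentially cosmetic rather than mathematical: the result is a classical fact about Monte Carlo integration, so there is no deep step to overcome. The only point requiring mild care is justifying that the $L^2$ integrability hypothesis is exactly what is needed to make both $I(f)$ and $\sigma^2(f)$ well defined (hence the preliminary Cauchy--Schwarz remark), and being explicit that independence, not merely pairwise uncorrelatedness, is what makes the variance-of-sum step trivial; beyond that, the proof is bookkeeping.
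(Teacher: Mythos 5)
Your proof is correct and is the standard argument: unbiasedness of the sample mean plus additivity of variances for i.i.d.\ (indeed, merely pairwise uncorrelated) summands gives the exact $\sigma(f)/\sqrt{N}$ error identity, from which $L^2$ convergence is immediate. Note that the paper does not actually prove Theorem~\ref{theorem:mc-int}; it quotes the result from the cited references on Monte Carlo integration, so there is no in-paper proof to compare against --- your write-up supplies precisely the classical derivation those references contain, and your preliminary Cauchy--Schwarz remark correctly identifies $L^2(\mu_p)$ as the hypothesis making both $I(f)$ and $\sigma^2(f)$ well defined.
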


The Haar measure in (\ref{eq:equivariant_conv}) can be considered to be a corresponding probabilistic measure $\mu_p$.
Therefore, it is theoretically justified to apply MC sampling for the discrete implementation of G-CNN.

\subsubsection{Discrete implementation of G-CNN with MC integration}
\label{sec:discreteMCG}
In the discrete implementation, we stochastically sample the group operations including, in our example, scaling, rotation, and shear transform. This approach allows a more flexible choice of the number of used transformations and decouples the relationship between the number of output channels and the number of categories of considered transformations.

Specifically, when we consider a filter $W = w \cdot \psi$ with a fixed base filter $\psi$ and $w$ the trainable scalar weight, a continuous CNN layer can be written as
\begin{equation}
	\begin{array}{l}
		f^{(l+1)}_{c_o}(x) = \sum_{c_i} w^{(l)}_{c_o,c_i} (\psi*f^{(l)}_{c_i})(x)\\
		= \sum_{c_i}\int_{\mathbb{R}^2} w^{(l)}_{c_o,c_i} \psi(u-x)f^{(l)}_{c_i}(u)du
	\end{array}
	\label{eq:continu_conv}
\end{equation}	

A corresponding discrete implementation of the convolutional layer\footnote{It should be noted that in this paper, for simplicity, we omit point-wise nonlinearity functions, constant scalar coefficients, and normalization layers in neural networks, which do not affect the group equivariance\cite{kondor2018generalization}.} of $l$-th layer is as below
\begin{equation}
	f^{(l+1)}_{c_o}(x) = \sum_{c_i} \sum_{u} w^{(l)}_{c_o,c_i}\psi(u-x) f^{(l)}_{c_i}(u)
	\label{eq:cnn}
\end{equation}	 
where $x,u\in \mathbb{R}^2$, $\psi(\cdot)$ denotes the spatial convolutional filter function with a domain of translation group $\mathbb{R}^2$, $c_i \in [1,C_l]$ and $c_o\in [1,C_{l+1}]$. $f^{(l)}_{c_i}(x)$ is the feature map of the $l$-th layer and $w^{l}_{c_o,c_i}$ is the filter weight for the filter of the $l$-th layer with output channel $c_o$ and input channel $c_i$.

A continuous affine group equivariant CNN can be written as 
\begin{equation}
	\begin{array}{l}
		f^{(l+1)}_{c_o}(g) = \sum_{c_i} w^{(l)}_{c_o,c_i} (\psi * f^{(l)}_{c_i})(g) \\
		= \sum_{c_i}\int_{G}w^{(l)}_{c_o,c_i}\psi(g^{-1}\bullet g^\prime)f^{(l)}_{c_i}(g^\prime)d\mu (g^\prime)
	\end{array}
	\label{eq:equivariant_conv_rd}
\end{equation}

Let $g = (x,a)$ and $g^\prime = (u,b)$, we can rewrite the Haar integration in a group convolution of the $l$-th layer as:
\begin{equation}
	\begin{array}{l}
		f^{(l+1)}_{c_o}(x,a) 
		= \sum_{c_i}\int_{\mathbb{R}^4}\int_{\mathbb{R}^2} w^{(l)}_{c_o,c_i}2^{-2\alpha_b}\cdot \\
		\psi(-x+M(-a)u,-a+b)f^{(l)}_{c_i}(u,b)du db
	\end{array}
	\label{eq:equivariant_conv_rd}
\end{equation}	
where we have the transform parameter vectors $a = [\alpha_a,\theta_a,s_a,r_a]$, and $b = [\alpha_b,\theta_b,s_b,r_b]$.

A typical corresponding discrete G-CNN can be written as below:
\begin{equation}
	\begin{array}{l}
		f^{(l+1)}_{c_o}(x,a) =\sum_{c_i} \sum_{b} \sum_{u} 
		w^{(l)}_{c_o,c_i} 2^{-2\alpha_b} \cdot \\\psi(-x+M(a)u,-a+b) f^{(l)}_{c_i}(u,b) 
	\end{array}
	\label{eq:gcnn}
\end{equation}	
In particular, the sum over the parameter vector $b$ is a three-layer nested sum corresponding to the nested integrals in the continuous domain, which, as mentioned in previous sections, leads to a heavy computational burden.

The Monte-Carlo integration considers $a$ and $b$ as random variables. Suppose their entries $\alpha=\xi_{\alpha}$, $\theta=\xi_{\theta}$, $s=\tan(\xi_{s})$, and $r=\tan(\xi_{r})$, where   $\xi_{\alpha}$, $\xi_{\theta}$, $\xi_{s}$ and $\xi_{r}$ are uniformly distributed in the range of $[\eta_{\alpha}^1,\eta_{\alpha}^2)$, $[-\eta_{\theta},\eta_{\theta})$, $[-\eta_s,\eta_s)$, and $[-\eta_r,\eta_r)$, respectively.

Suppose we draw $N^\prime$ samples of $a$, and $N$ samples of $b$, respectively. The nested sum over $b$ collapses into a one-dimension sum over $N$ samples for MCG-CNN (Monte Carlo Group-equivariant CNN):
\begin{equation}
	\begin{array}{l}
		f^{(l+1)}_{c_o}(x,a_{n^\prime}) =\sum_{c_i} \sum_{n} \sum_{u} w^{(l)}_{c_o,c_i}2^{-2\alpha_{b_{n}}}\cdot 
		\\ \psi(-x+M(-a_{n^\prime})u,-a_{n^\prime}+b_n) f^{(l)}_{c_i}(u,b_n)
		
	\end{array}
	\label{eq:gcnn}
\end{equation}	
where $n^{\prime}\in \{1,\dots, N^{\prime}\}$, and $n\in \{1,\dots, N\}$.

\subsubsection{Adaptive aggregation of MC-augmented filters}

The Monte-Carlo approximation of G-CNN allows a flexible choice of the number of sampling points $N$ per trainable weight $w^{(l)}$ independent of the number of dimensions. However, compared with standard CNN, the computational burden of MCG-CNN is still $N$ times larger. To eliminate the difference in computational burden between MCG-CNN and standard CNN, we propose WMCG-CNN (Weighted Monte Carlo Group-equivariant CNN)\footnote{The word "Weighted" in WMCG-CNN is used to emphasize that the number of trainable filter weights becomes transformation-wise in WMCG-CNN, which is thus an adaptive aggregation of augmented filters.}, which reduces the number of transformations per input feature map channel (also per trainable weight) $N$ to $1$ and uses filter-weight-wise sampling instead.
Specifically, we establish a one-to-one relationship between $b$, $c_o$ and $c_i$, as well as $a$ and $c_o$ by using $c_o$ and $c_i$ to index $a$ and $b$. Thus we introduce notation $b_{c_o,c_i}$ and $a_{c_o}$.

In this way, we yield WMCG-CNN with the equation \eqref{eq:gcnn} simplified into:
\begin{equation}
	\begin{array}{l}
		f^{(l+1)}_{c_o}(x,a_{c_o}) =\sum_{c_i} \sum_{u} 
		w^{(l)}_{c_o,c_i} 2^{-2\alpha_{b_{c_o,c_i}}} \cdot \\
		\psi(-x+M(-a_{c_o})u,-a_{c_o}+b_{c_o,c_i})
		f^{(l)}_{c_i}(u,b_{c_o,c_i}),
	\end{array}
	\label{eq:wmcgcnn}
\end{equation}	
WMCG-CNN allows us to significantly increase the number of used transformations without increasing the computational burden, which, as shown in the later experiments, helps WMCG-CNN achieve superior performance to traditional discrete G-CNN.

However, due to the changes happening to WMCG-CNN, a question arises, i.e., under which circumstances, the WMCG-CNN can still be analogous to continuous G-CNN as the discrete G-CNN does? Below, we show that random initialization of the trainable weights can help the WMCG-CNN to be analogous to continuous G-CNN.

\begin{theorem}
	Let $f^{(l)}$ be an input feature map of the $l$-th layer with the number of channels $C_l$, and for each channel the number of spatial sampling points along vertical direction $N_H$, the number of spatial sampling points along horizontal direction $N_W$.
	A WMCG-CNN layer is group equivariant when the width of CNN, $C_l \rightarrow \infty$, $N_H\rightarrow \infty$, $N_W\rightarrow \infty$, and there exists a constant $C<+\infty$ so that $\|\int_{\mathbb{R}} w d\mu_w(w)\|<C$ with $\mu_w$ a probabilistic measure on $(\mathbb{R},\mathcal{B}(\mathbb{R}))$ for the filter weight $w$, being a random variable.
	
	\label{theorem:wmcg}
\end{theorem}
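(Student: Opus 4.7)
The plan is to reduce the claim to the equivariance of continuous group convolution (equation (\ref{eq:equivariant_conv_rd})) by showing that, in the stated joint limit, the WMCG-CNN operator of (\ref{eq:wmcgcnn}) converges pointwise to this continuous operator. Since continuous affine group convolution is already known to satisfy (\ref{eq:equivariant}) (the Kondor--Trivedi result invoked earlier in the paper), a continuous-mapping argument then transfers equivariance back to the limit of WMCG-CNN layers.

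I would take the two limits in order. Under $N_H,N_W\to\infty$ with vanishing grid spacing, the inner sum $\sum_u$ in (\ref{eq:wmcgcnn}) is a Riemann sum over $\mathbb{R}^2$; assuming $\psi$ and $f^{(l)}_{c_i}$ are sufficiently regular (continuity and decay), it converges to $\int_{\mathbb{R}^2}\psi(-x+M(-a_{c_o})u,-a_{c_o}+b_{c_o,c_i})\,f^{(l)}_{c_i}(u,b_{c_o,c_i})\,du$, producing the inner Haar integral over the translation subgroup. Next, under $C_l\to\infty$, the outer sum $\sum_{c_i}$ pairs i.i.d.\ samples $b_{c_o,c_i}$ of the non-translation transform parameter (drawn as specified in Section \ref{sec:discreteMCG}) with i.i.d.\ weights $w^{(l)}_{c_o,c_i}$. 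Writing $Z_{c_i}=w^{(l)}_{c_o,c_i}\,H(b_{c_o,c_i})$, where $H(b)$ denotes the limiting spatial integral evaluated at $b$, independence of $w$ from $b$ together with $\bigl\|\int w\,d\mu_w(w)\bigr\|<\infty$ and boundedness of $H$ (via compact support of $\psi$ or a truncation) gives $\mathbb{E}\|Z_{c_i}\|<\infty$. Theorem \ref{theorem:mc-int} then yields $\tfrac{1}{C_l}\sum_{c_i} Z_{c_i}\to \mathbb{E}[w]\int H(b)\,d\mu_b(b)$; the normalizing factor $1/C_l$ is absorbed into the trainable scale of the kernel, so the summed expression matches the nested Haar integral in (\ref{eq:equivariant_conv_rd}) up to a constant that can be folded back into the filter weights.

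Combining the two limits recovers continuous affine group convolution, whose equivariance gives the conclusion. The main obstacle I expect is rigorously coupling the two limits: the spatial Riemann error must vanish uniformly over the random draws $a_{c_o},b_{c_o,c_i}$, which likely requires restricting $\alpha$ to the bounded sampling interval $[\eta_\alpha^1,\eta_\alpha^2)$ so that $\mu_b$ becomes a genuine probabilistic measure on a compact domain as demanded by Theorem \ref{theorem:mc-int}. A secondary technical point is justifying the factorization $\mathbb{E}[w\,H(b)]=\mathbb{E}[w]\,\mathbb{E}[H(b)]$, which holds because the random initialization of $w$ is chosen independently of the Monte Carlo samples of $b$. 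A final subtlety is that restricting the outer integral to a bounded set breaks strict equivariance under transforms that push samples outside the range; this is handled as in other MC G-CNN works by interpreting equivariance as holding for the subgroup of transforms preserving the sampling domain, which suffices for all relevant practical cases.
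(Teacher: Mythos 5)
Your proposal is correct and follows essentially the same route as the paper: both arguments identify the $C_l,N_H,N_W\to\infty$ limit of (\ref{eq:wmcgcnn}) as a constant multiple of the continuous group convolution (the constant being $\int_{\mathbb{R}} w\,d\mu_w(w)$, extracted in the paper by writing a joint integral over $\mathbb{R}\times G$ and in your version by the independence factorization $\mathbb{E}[w\,H(b)]=\mathbb{E}[w]\,\mathbb{E}[H(b)]$, which amounts to the same product-measure observation), and then appeal to the known equivariance of that limit. Your added remarks on uniformity of the Riemann error, compactness of the parameter domain, and boundary effects are reasonable technical caveats that the paper's proof leaves implicit, but they do not change the argument.
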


\begin{proof}

	To prove the theorem, we have two steps: First, we construct a weighted integration function $I$ and prove it is group equivariant. Then, we show that equation \eqref{eq:wmcgcnn} corresponds to the discrete form of $I$.
	
	1) Given $g = (x,a_{c_o})$ and $g^\prime = (u,b)$, we define the integration on $\mathbb{R}\times G$ as
	\begin{equation}
		\begin{array}{l}
			I(x,a_{c_o}) \\
			= \int_{\mathbb{R}\times G} w\cdot \psi(g^{-1} \bullet g^\prime)f^{(l)}(g^\prime)d\mu (g^\prime)d\mu_w(w) \\
			= \int_{\mathbb{R}}\int_{\mathbb{R}^d} \int_{\mathbb{R}^2} w \psi(-x+M(-a_{c_o})u,-a_{c_o}+b) \\f^{(l)}(u,b)dudbdw
		\end{array}
		\label{eq:wmcg}
	\end{equation}	
	
	Since $\|\int_{\mathbb{R}} w d\mu_w(w)\|<C$, we have the constant $C_w = \int_{\mathbb{R}} w d(w)$.
	Thus 
	\begin{equation}
		\begin{array}{l}
			I(x,a_{c_o}) = C_w\cdot \int_{\mathbb{R}^d} \int_{\mathbb{R}^2} \psi(-x+M(-a_{c_o})u,-a_{c_o}+b) \\f^{(l)}(u,b)dudb
		\end{array}
		\label{eq:wmcg_c}
	\end{equation}	
	which is group equivariant.
	
	2) Let $q(x,a_{c_o},b)=\int_{\mathbb{R}^2}\psi(-x+M(-a_{c_o})u,-a_{c_o}+b) f^{(l)}(u,b)du$, so we have
	\begin{equation}
		\begin{array}{l}
			I(x,a_{c_o}) = \int_{\mathbb{R}}\int_{\mathbb{R}^d}wq(x,a_{c_o},b) dbdw
		\end{array}
		\label{eq:wmcg_q}
	\end{equation}

	Now, we consider the transition from continuous to discrete formulations.
	Since both $w$ and $b$ are independently randomly sampled with the samples indexed by $c_i$.
	According to Theorem \ref{theorem:mc-int}, we have 
	\begin{equation}
		\begin{array}{l}
			I(x,a_{c_o}) =\lim_{C_l\rightarrow \infty} \frac{1}{C_l}\sum_{c_i}w^{(l)}_{c_o,c_i}q(x,a_{c_o},b_{c_o,c_i}) 
		\end{array}
		\label{eq:wmcg1}
	\end{equation}	
	
	Since $u$ is sampled based on the trapezoidal rule, we have
	\begin{equation}
		\begin{array}{l}
			q(x,a_{c_o},b_{c_o,c_i})\\
			= \lim_{N_H\rightarrow +\infty}\lim_{N_W\rightarrow +\infty}\frac{1}{N_H N_W}\sum_{u}  2^{-2\alpha_{b_{c_o,c_i}}}\cdot\\ 
			\psi(-x+M(-a_{c_o})u,-a_{c_o}+b_{c_o,c_i}) f^{(l)}_{c_i}(u,b_{c_o,c_i}),
		\end{array}
		\label{eq:wmcg2}
	\end{equation}

	Meanwhile, we rewrite the corresponding convolution part of WMCG-CNN equation \eqref{eq:wmcgcnn} as  
	\begin{equation}
		\begin{array}{l}
			f^{(l+1)}_{c_o}(x,a_{c_o}) 
			=\frac{1}{C_l N_H N_W}\sum_{c_i} \sum_{u} w^{(l)}_{c_o,c_i} 2^{-2\alpha_{b_{c_o,c_i}}} \cdot\\
			\psi(-x+M(-a_{c_o})u,-a_{c_o}+b_{c_o,c_i}) f^{(l)}_{c_i}(u,b_{c_o,c_i}),
		\end{array}
		\label{eq:wmcg_f}
	\end{equation}	
	where $c_i\in \{ 1,2,\dots,C_l\}$, $u=(u_1,u_2)$ with $u_1\in \{1,2,\dots,N_H\}$ and $u_2\in \{1,2,\dots,N_W\}$. Here we include coefficient $\frac{1}{C_l N_H N_W}$ so that $f^{(l+1)}_{c_o}$ is the average of the samples.
	
	Therefore, by combining (\ref{eq:wmcg1}) and (\ref{eq:wmcg2}), we have
	\begin{equation}
		\begin{array}{l}
			I(x,a_{c_o}) \\
			=\lim_{C_l\rightarrow \infty} \lim_{N_H\rightarrow +\infty}\lim_{N_W\rightarrow +\infty}f^{(l+1)}_{c_o}(x,a_{c_o})
		\end{array}
		\label{eq:wmcg_disc}
	\end{equation}	
	The proof is completed.

\end{proof}
As we know, random initialization of trainable weights is a common strategy adopted in most existing state-of-the-art deep learning methods. Theorem \ref{theorem:wmcg} proves that the random weight initialization strategy together with the MC-augmented filters can help raise the CNN to a good starting point before training with an optimization algorithm, which therefore makes it easier for the network to find the optimal solution. This starting point is a network that approximately satisfies convolutional-layer-wise group equivariance.
Obviously, a necessary condition of an optimal solution is that in contrast to the approximate convolutional-layer-wise group equivariance, it is at least at the level of the entire neural network that the group equivariance is achieved approximately.

From Theorem \ref{theorem:mc-int}, we know that the convergence speed of Monte Carlo integration is slow. When the number of samples is small, the variance may not be satisfactory.  
However, with the weight $w$ as learnable parameters and the samples of transformations fixed, the neural network can learn optimal weight distribution to improve the group equivariance, which will be shown in the later experiments (Fig. \ref{fig:mGE}). 
Accordingly, we can rewrite equation \eqref{eq:wmcg1} as follows:
\begin{equation}
	\begin{array}{l}
		I(x,a_{c_o}) =\lim_{C_l\rightarrow \infty} \frac{1}{C_l}\sum_{c_i}w^{(l)}_{c_o,c_i}q(x,a_{c_o},b_{c_o,c_i}) \\
		\approx \hat{I}(x,a_{c_o}) = \frac{1}{C_l}\sum_{c_i}w^{(l)}_{c_o,c_i}q(x,a_{c_o},b_{c_o,c_i}) 
	\end{array}
	\label{eq:imp_smpl}
\end{equation}

The estimator $\hat{I}$ in equation \eqref{eq:imp_smpl}) has a similar form to that of the conventional importance sampling method\cite{glynn1989importance}. 
The differences are that the weight distribution in WMCG-CNN is not manually designed but is learned by iterative data-driven optimization algorithms for neural networks instead, and the learned weights may not be considered being i.i.d. random.

Assuming $I$ a unique constant for the given $x$ and $a_{c_o}$, we have the variance $Var(\hat{I}-I)=Var(\hat{I})$. 
	We know that the variance of the estimator $\hat{I}$ is reduced during the training when the training process leads to affine equivariance with $Var(\hat{I}-I)\rightarrow 0$. Such a training process usually requires rich affine transformations existing in the image sets. This requirement is usually met in practice because the affine transformations are common in natural images. Moreover, affine-transformation-based data augmentation is also a common training technique to achieve this.

In addition to factors during training, in practice, certain fundamental structural features of neural networks also help ensure training stability and low variance. Specifically, when applying WMCG-CNN, multiple bases are often used. This increases the diversity of the filters. Furthermore, residual connections are used in each bottleneck block of the hidden layer to avoid information loss and instability, which is common in most modern neural network architectures. Residual connections also implicitly increase the number of samples, as they transmit the same input to the next layer to be processed by more filters. Therefore, when using our method on the hidden layers of neural networks with residual connections, we generally do not need to worry about its stability.

\subsubsection{Filter decomposition and the relationship to traditional CNN filters}
In the previous section, we only consider one basis filter function $\psi$, to increase the expressiveness of networks, we adopt the filter decomposition approach to build convolutional filters by the sum of multiple weighted filter bases. 
Specifically, we have $W^{(l)}_{c_o,c_i}(x,a) = \sum_j w^{(l)}_{c_o,c_i,j} \Tilde{\psi}_j(x,a) $ with $\Tilde{\psi}_j(x,a)$ an orthogonal basis function with $x\in \mathbb{R}^2$ and $a\in \mathbb{R}^4$ the transform parameter vector, $w^{(l)}_{c_o,c_i,j}$ the trainable weights, $j\in [1,K]$, and $K$ the chosen number of basis functions. In the proposed WMCG-CNN, according to equation \eqref{eq:wmcgcnn} the WMCG-CNN can be written in a similar way to the standard CNN in equation \eqref{eq:cnn} as below:
\begin{equation}
	\begin{array}{l}
		f^{(l+1)}(c_o,x,a_{c_o}) =\sum_{c_i} \sum_{u} 2^{-2\alpha_{b_{c_o,c_i}}} W^{(l)}_{c_o,c_i}(\\ 
		-x+M(-a_{c_o})u,-a_{c_o}+b_{c_o,c_i}) f^{(l)}_{c_i}(u,b_{c_o,c_i})
		,
	\end{array}
	\label{eq:mcgcnn2}
\end{equation}	
In the practical discrete implementation, the choice of filter basis can be various. For different datasets and different tasks, the optimal filter basis can be different. In the following experiments, we generally adopt two kinds of filter basis: the Fourier-Bessel (FB) basis\cite{qiu2018dcfnet}, and the continuous Mexican hat (MH) wavelet basis\cite{ryan1994ricker,antoine1993image}. The scaling, rotation, and shear transformations are used to augment the FB filters. The MH filters are augmented by scaling, translation, and shear transformations.
	Supposing any filter basis is a matrix of size $k\times k$, for FB basis, we can have $k^2-1$ non-constant bases and a constant scalar basis at the most.

The 2D MH filters can be written as
\begin{equation}
	\begin{array}{l}
		\psi(\sigma_x,\sigma_y,x,y)=\frac{1}{2\pi \sigma_x\sigma_y}[2-\frac{x^2}{\sigma_x^2}-\frac{y^2}{\sigma_y^2}]e^{-\frac{x^2}{2\sigma_x^2}-\frac{y^2}{2\sigma_y^2}}
	\end{array}
	\label{eq:mexicanhat}
\end{equation}	
The peak frequency of the MH wavelet function $f_x=\frac{1}{\sqrt{2}\pi\sigma_x}$ and $f_y=\frac{1}{\sqrt{2}\pi\sigma_y}$\cite{wang2015frequencies}, which can be used for scaling along $x$ or $y$ axis.

It should be noted that when using the bases consisting of translation-augmented discrete Dirac delta functions, the proposed methods fall back into standard CNN filters. Figure \ref{fig:filterbasis} shows examples of different filter bases.
 
In addition, theoretically, in the training phase, the computational burden of the proposed method is slightly higher than that of the corresponding standard CNN that uses the same size of the convolutional kernel (as shown in Table \ref{tab:time}). This is because of the weighted sum of filter bases. Yet, the weighted sum can be pre-calculated with the results stored in memory before doing an inference. Thus, in the inference phase, the network using the proposed method has exactly the same computational complexity as the corresponding standard CNN.

\begin{table*}[htbp]
	\centering
	\caption{The computational time and memory usage on GPU Nvidia A100 40GB for CNN models trained on ImageNet dataset, where both CNN and our WMCG-CNN models are using the same kernel size and the same number of learnable parameters.}
	\centerline{\resizebox{14.0cm}{!}{ 
			\begin{threeparttable}[b]
\begin{tabular}{l|cc|cc|cc}
	\hline
	Base model                   & \multicolumn{2}{c|}{ResNeXt50} & \multicolumn{2}{c|}{ResNet50} & \multicolumn{2}{c}{ConvNeXt-S} \\ \hline
	Method                       & CNN          & WMCG-CNN       & CNN         & WMCG-CNN       & CNN          & WMCG-CNN        \\\hline
	Training Time (s/epoch)       & 3884         & 3914           & 2793        & 2808           & 4739         & 4754            \\
	Training VRAM Footprint (MB) & 35078        & 35116          & 24934       & 25106          & 25742        & 25990          \\\hline
\end{tabular}
	\end{threeparttable}}}
	\label{tab:time}
\end{table*}	

\begin{figure}[t]
	\centering
	\includegraphics[width=0.9\linewidth]{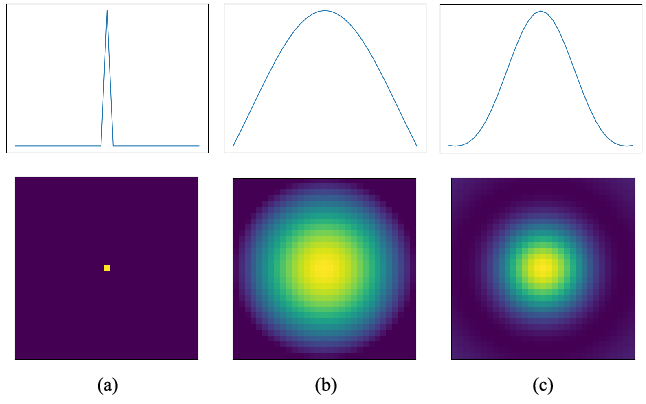}		
	\caption{Examples of filter bases in 1-dimension and 2-dimension space. (a) the discrete Dirac delta basis, (b) the Fourier Bessel basis; (c) the Mexican hat basis.}
	\label{fig:filterbasis}
\end{figure}

\subsection{Extending to discrete groups with bootstrap resampling}
In the previous sections, we focus on continuous groups and how the weighted G-CNN can be approximated with the discretized implementation assuming an infinite number of filter samples. However, our method can also apply to cases where the number of available group elements is far less than the number of input-output channel pairs (in equation \eqref{eq:wmcgcnn}). We can use the bootstrap resampling\cite{hesterberg2015teachers} method to make the number of augmented basis samples large enough to match each weighted input-output channel pair. 

\subsection{Integrating WMCG-CNN into the existing state-of-the-art CNN architectures}
\begin{figure}[t]
	\centering
	\includegraphics[width=0.9\linewidth]{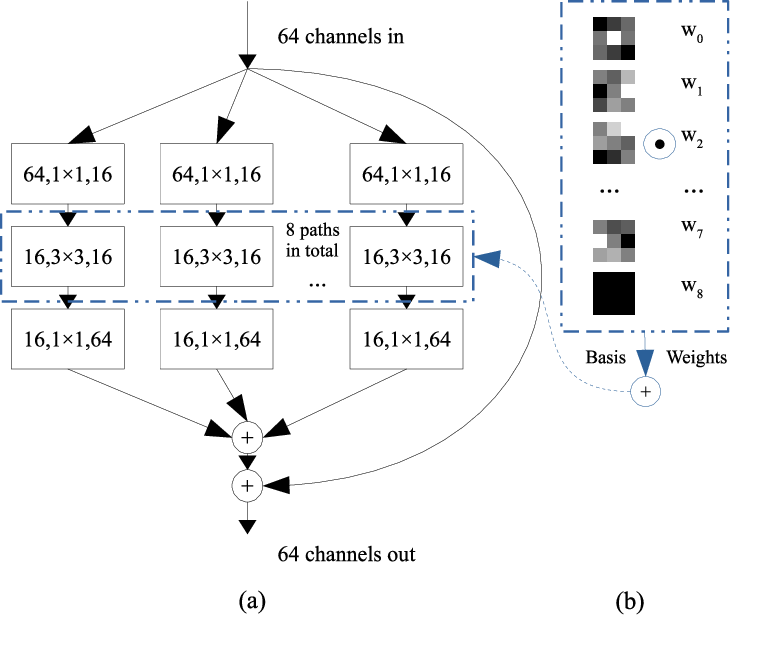}		
	\caption{ Integrating the proposed WMCG-CNN into the classic bottleneck architecture.
		(a) The example bottleneck block with group convolution using $3\times 3$ filters; (b) An example of filter composition with MC-augmented basis.}
	\label{fig:shearfilter}
\end{figure}

We see that when $\Tilde{\psi}_j$ degenerates to a scalar, i.e. a $1\times 1$ base filter, the convolution is obviously exactly group equivariant, while on the other hand, the non-scalar filter $\Tilde{\psi}_j$ requires a huge number of sampling points to approximate the continuous G-CNN. 
To leverage the advantage of $1\times 1$ base filters, one can add $1\times 1$-filter-based convolution layers as a secondary adaptive aggregation of features from the output channels of WMCG-CNN. 
By combining the $1\times 1$ layer with the $k\times k$ convolution layer into a single unit or block, the total number of considered transformations is increased from $C_l$ to $C_{l+1}C_l$ (i.e., the number of all the $k\times k$ filters used in the $l$-th layer) with a relatively small increase of the number of parameters. 
In addition, the $1\times 1$ CNN layer also helps to enrich the design space for WMCG-CNN, where the use of the small $1\times 1$ kernel helps to achieve a high parameter efficiency given the same level of expressiveness and the same number of parameters\cite{he2016deep}.

Interestingly, the secondary aggregation with a cascaded $1\times 1$ convolutional layer is intrinsically similar to the bottleneck architecture that is adopted in all the state-of-the-art CNNs derived from ResNet\cite{he2016deep}. The only difference is that the bottleneck architecture uses one extra $1\times 1$ convolution layer before the $k\times k$ convolution layer. 
This typical bottleneck architecture has been adopted in the state-of-the-art CNN architectures in recent years\cite{ma2024efficient,yu2024inceptionnext,iclr2024MogaNet,lou2025overlock}. This allows our methods to be easily integrated into or combined with these architectures for a potential performance improvement.

Apart from $1\times 1$ layers, we also note that the channel grouping convolution technique\footnote{It should be noted that here the channel group is a concept that differs from the transformation group. The channel grouping convolution technique divides the input feature map channels into multiple channel groups of the same width to perform convolution operations separately.} proposed in ResNeXt\cite{xie2017aggregated} is also a helpful technique for improving CNN's performance.

Thanks to the flexibility of the proposed WMCG-CNN, we can easily combine these techniques with the WMCG-CNN. An example is shown in Fig. \ref{fig:shearfilter}. Similar blocks but with different filter sizes will be used in later image denoising experiments.

\section{Experiments}
We test WMCG-CNN on classification and regression tasks, such as image classification and image denoising. 
In the image classification part, we also conducted ablation experiments, and compared our method with the parameter-sharing group equivariant methods.

\subsection{Performance metrics}

We adopt the following performance metrics:
the number of trainable parameters in million ($10^6$), Params(M); the number of Multiply–Accumulate Operations in giga ($10^9$), MACs(G); the prediction error in percentage, Error(\%); mean prediction error on corrupted validation image datasets in percentage, mCE(\%); top 1 accuracy in percentage, top-1 acc.(\%); top 5 accuracy in percentage, top-5 acc.(\%); peak signal-to-noise ratio in dB, PSNR(dB); the degree of parameter-sharing, MACs$/$Params (G/M).

In addition, for the section of the ablation experiments, similar to \cite{worrall2019deep}, we define mean normalized group-equivariant error (mGE) according to equation \eqref{eq:equivariant}:
\begin{equation}
	mGE = \mathbb{E}(\|\phi(\mathbb{T}_g(f))-\mathbb{T}^{\prime}_g(\phi(f)) \|/\|\phi(\mathbb{T}_g(f))\|)
	\label{eq:gerror}
\end{equation}	 
where for each input image, a random affine transformation $g\in G$ is selected with the shear range of $[-0.5\pi,0.5\pi)$, the scaling range of $[1.0,2.0)$ and rotation angle range of $[-1.0\pi,1.0\pi)$.

\subsection{Ablation experiments}

\subsubsection{Experimental setup}
For ablation experiments, we consider a subset of the ImageNet1k dataset. ImageNet1k has $1.28$ million color images for $\num[group-separator={,}]{1000}$ different classes from WordNet. The validation dataset consists of $\num[group-separator={,}]{50000}$ images. For quick experiments, we extract the first $40$ classes for ablation experiments (i.e., from class $n01440764$ to class $n01677366$), and thus we denote the corresponding datasets as ImageNet40. We scale all the images to $224\times 224$ resolution and normalize images in a classic way. 
The prediction Error (\%) is used to measure the classification performance.

We use ResNet18, ResNet50, and ResNeXt50 \cite{xie2017aggregated} as the baseline networks. We follow the state-of-the-art robust training methods as in \cite{hendrycks2022pixmix}. The neural networks are trained for $90$ epochs with an initial learning rate of $0.01$ following a cosine decay schedule. The Pixmix augmentation technique is used with its default setting as in \cite{hendrycks2022pixmix}. Pixmix uses affine transformations (including translation, rotation, and shear transform) as well as other augmentation methods to generate augmented clean images. 
As for WMCG-CNN, we replace all the hidden non-$1\times 1$ CNN layers with the proposed WMCG-CNN layers.  By default, the size of FB basis is $5\times 5$, the number of basis per filter is $9$ (We have the number of bases per filter $9$ (the first $9$ low frequency Bessel filters), the scaling range is of $[1.0,2.0)$, the rotation angle range of $[-2\pi,2\pi)$, and the shear transform angle range of $[-0.25\pi, 0.25\pi)$. In this work, considering the symmetry of the filter basis, by default, we keep the shear angle of $S_2(r)$ as zero for simplicity.

\subsubsection{Experimental results}

\begin{figure}[t]
	\centering
	\includegraphics[width=0.80\linewidth]{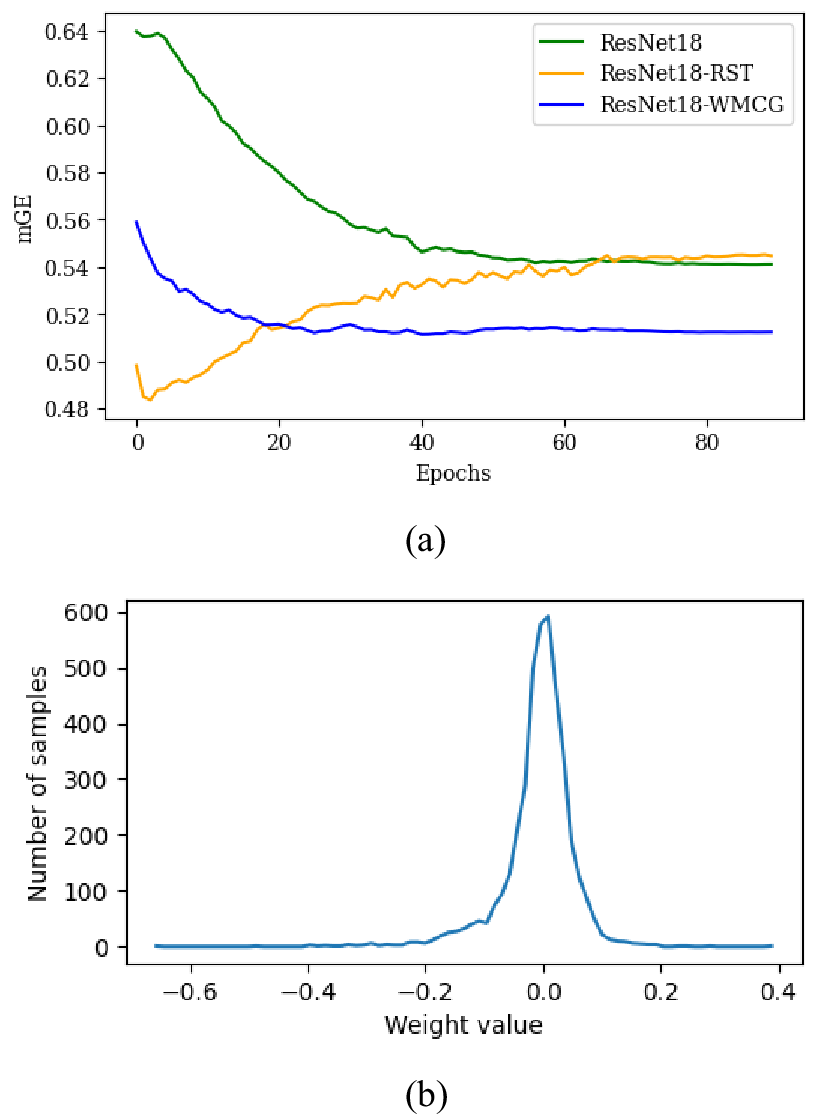}	
	\caption{ (a) The mGEs of the first hidden CNN layer with 256 input and output channels of different residual networks for $\num[group-separator={,}]{90}$ epochs of training on ImageNet dataset. For all the ResNet18 variants, the hidden $3\times 3$ CNNs layers are replaced with a corresponding $5\times 5$ CNNs. The $5\times 5$ CNN layers of ResNet18-RST are RST-CNNs\cite{gao2021deformation}.  (b) The histogram of the learned weights for the FB basis of order $0$ in the first hidden CNN layer of ResNet18-k5-WMCG-shear-0.25$\pi$.}
	\label{fig:mGE}
\end{figure}

Fig. \ref{fig:mGE}a shows the mGE results for the first hidden CNN layer with 256 input and output channels of ResNet18, ResNet18-RST, and ResNet18-WMCG for the $\num[group-separator={,}]{90}$-epoch training on ImageNet dataset. 
We see that compared with the plain CNNs and RST-CNNs, WMCG-CNNs achieve the lowest mGE after 90 epochs of training. The mGE of RST-CNN initially drops sharply but increases gradually to a high value. This implies that the parameter-sharing strategy helps reduce mGE more quickly in the initial stages but may not boost the equivariance in the following training iterations. Figure \ref{fig:mGE}b shows that the distribution of the learned weights is centered around zero.

\begin{figure*}[t]
	\centering
	\includegraphics[width=0.95\linewidth]{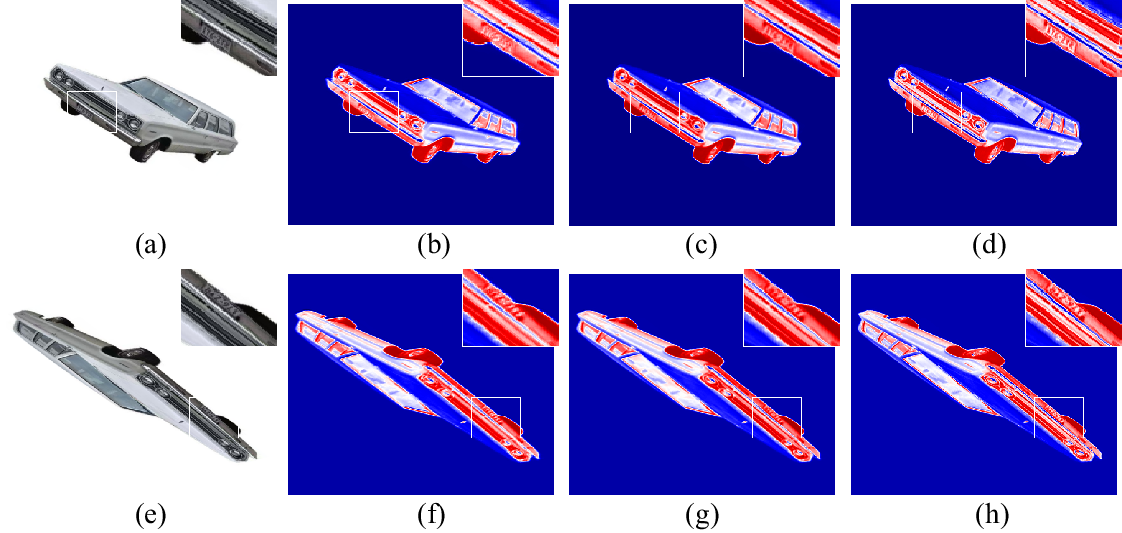}	
	\caption{  The output feature map of CNNs for affinely transformed inputs. The original input image is cropped from a car image selected from STL10\cite{coates2011analysis} dataset. The areas in the white rectangular boxes are magnified to show the details. (a) Input 1; (b) The output of plain CNN for Input 1; (c) The output of RST-CNN for Input 1; (d) The output of WMCG-CNN for Input 1; (e) Input 2; (f) The output of plain CNN for Input 2; (g) The output of RST-CNN for Input 2; (h) The output of WMCG-CNN for Input 2.}
	\label{fig:acti_map}
\end{figure*}

Fig. \ref{fig:acti_map} shows the output feature maps of the plain CNN, RST-CNN\cite{gao2021deformation}, and our WMCG-CNN for affinely transformed input images. All the CNNs consist of the classic two layers structure in bottleneck block of ResNet, where the first convolution layer has a kernel size of $5\times 5$, 3 input channels, and 256 output channels, and the second layer is of kernel size $1\times 1$ with 1 output channel. The $1\times 1$ convolutional layers for all the CNNs have the same weights. 
We see that both RST-CNN and our WMCG-CNN preserve the ID code on the front license plate better than the plain CNN. The ID code in the feature map of plain CNN is heavily distorted (as shown in Fig. \ref{fig:acti_map}(b) and Fig. \ref{fig:acti_map}(f)) along with the affine transformation, which implies its poor affine equivariance.The output features of our WMCG-CNN have clearer edges and richer details than RST-CNN. This is likely because while both networks keeps the same number of channels, the non-parameter-sharing weighted aggregation design of our WMCG-CNN allows it to employ more diverse filters than RST-CNN, which helps the network to capture more image features efficiently.

\begin{table*}[htbp]
	\centering
	\caption{The ablation experiments on the effect of shear augmentation range on the ImageNet40 dataset for different residual networks and filter bases. Dirac means the discrete Dirac delta basis. Kernel Size refers to the size of non-$1\times 1$ kernels of the hidden CNN layers.
	}
	\centerline{\resizebox{12.0cm}{!}{ 
			\begin{threeparttable}[b]
				\begin{tabular}{l|cccc|cc|c} 
					\hline
					Base Model         & Filter  & Kernel Size & Method & Shear Range &   \multicolumn{1}{l}{Params (M)}& \multicolumn{1}{l}{MACs (G)}& \multicolumn{1}{l}{Error (\%)} \\ 
					\hline
					\multirow{5}{*}{ResNet18\cite{he2016deep}} & Dirac & $3\times 3$ &  Vanilla &   & 11.69    &  1.82    & 24.85                   \\ 
					& MH & $3\times 3$ & WMCG & 0.00  &   11.69  &   1.82   & 25.10                 \\ 
					& MH & $3\times 3$ & WMCG & $\pm$0.12$\pi$   &  11.69   &  1.82    & \textbf{24.55}         \\ 
					& MH & $3\times 3$ & WMCG & $\pm$0.25$\pi$  &  11.69   &   1.82   & 26.85                  \\ 
					& MH & $3\times 3$ & WMCG & $\pm$0.50$\pi$  &  11.69   &  1.82    & 25.10                 \\ 
					\hline
					\multirow{4}{*}{ResNet18\cite{he2016deep}}& FB & $5\times 5$ & WMCG & 0.00   &   11.69  &   4.80   & 19.60                 \\ 
					& FB & $5\times 5$ & WMCG & $\pm$0.12$\pi$  &  11.69   &  4.80    & \textbf{18.80}         \\ 
					& FB & $5\times 5$ & WMCG & $\pm$0.25$\pi$   &  11.69   &   4.80   & 19.20                  \\ 
					& FB & $5\times 5$ & WMCG & $\pm$0.40$\pi$   &  11.69   &  4.80    & 19.40                 \\ 
					\hline
					\multirow{5}{*}{ResNeXt50\cite{xie2017aggregated}}& Dirac & $3\times 3$ & Vanilla &    &  25.03   &  4.27     & 27.00                   \\ 
					& FB & $5\times 5$ & WMCG & 0.00    &  25.03   &   4.68   & 27.60                   \\ 
					& FB & $5\times 5$ & WMCG & $\pm$0.12$\pi$   &  25.03   &   4.68   & 27.00                   \\ 
					& FB & $5\times 5$ & WMCG & $\pm$0.25$\pi$  &  25.03   &   4.68   & \textbf{26.95}         \\ 
					& FB & $5\times 5$ & WMCG & $\pm$0.40$\pi$    &   25.03  &  4.68    & 27.90                      \\
					\hline
				\end{tabular}
	\end{threeparttable}}}
	\label{tab:ablation_1}
\end{table*}	

\begin{table*}[htbp]
	\centering
	\caption{The ablation experiments on the choice of Fourier-Bessel bases on the ImageNet40 dataset using ResNet18 with a single Fourier-Bessel basis of kernel size $5\times 5$. The $n$ means the low frequency filter basis with the $n$-th lowest general cutoff frequencies.
	}
	\centerline{\resizebox{6.5cm}{!}{ 
			\begin{threeparttable}[b]
				\begin{tabular}{l|cc|c} 
					\hline
					The $n$-th basis     & \multicolumn{1}{l}{Params (M)}& \multicolumn{1}{l}{MACs (G)}& \multicolumn{1}{l}{Error (\%)} \\ 
					\hline
					1st   &   1.92  &   4.80   & \textbf{28.65}                \\ 
					3rd  &   1.92  &   4.80   & 32.05                \\ 
					8th  &   1.92  &   4.80   & 55.45                \\ 
					\hline
				\end{tabular}
	\end{threeparttable}}}
	\label{tab:ablation_2}
\end{table*}	

\begin{table*}[htbp]
	\centering
	\caption{The ablation experiments on different sampling strategies on the ImageNet40 dataset using ResNet18 with Fourier-Bessel bases of kernel size $5\times 5$. "ES" means equally spaced sampling. Sample Number is the number of transforms. Width Scaling is the coefficient for scaling the channel number per transform sample of CNN layers for maintaining the total width of each CNN layer.
	}
	\centerline{\resizebox{10.5cm}{!}{ 
			\begin{threeparttable}[b]
				\begin{tabular}{l|ccc|cc|c} 
					\hline
					Method & Transform Type & Sample Number &  Width Scaling     & \multicolumn{1}{l}{Params (M)}& \multicolumn{1}{l}{MACs (G)}& \multicolumn{1}{l}{Error (\%)} \\ 
					\hline
					ES & scale&4 & 1  &  11.69   &  18.77    & 19.80                 \\ 
					ES & scale&16& 1/4  &  11.69   &  18.77    & 19.00                 \\ 
					MC & scale&16 & 1/4   &  11.69   &  18.77    & 19.20                 \\ 
					MC & affine&16& 1/4   &  11.69   &  18.77    & \textbf{18.90}                 \\ 
					\hline
					ES & scale&4 & 1/4   &   3.45  &   4.80   & \textbf{24.45}                \\ 
					MC & scale&4 & 1/4  &   3.45  &   4.80   & 24.90                \\ 
					MC & affine&4 & 1/4   &   3.45  &   4.80   & 24.55                \\ 
					\hline
					ES & scale&16 & 1/16   &   1.39  &   4.80   & 35.50                \\ 
					MC & scale&16 & 1/16    &   1.39  &   4.80   & 36.75                \\ 
					MC & affine&16 & 1/16   &   1.39  &   4.80   & \textbf{32.65}                \\ 
					\hline
				\end{tabular}
	\end{threeparttable}}}
	\label{tab:ablation_3}
\end{table*}

Table \ref{tab:ablation_1} shows the results of ablation experiments on the effect of shear transform range on ImageNet40, where the results regarding Params(M) and MACs(G) are also displayed. We see that the shear transform with a suitable range of shear angle is helpful for increasing WMCG-CNN's performance. In all the following experiments, we adopt $n_s =0.25$ for FB basis and $n_s =0.5$ for the MH basis by default if not explicitly stated. It should be noted that the MH basis additionally uses translation augmentation that proves to help increase its performance. 

From Table \ref{tab:ablation_2}, we see that the choice of FB basis affects the prediction performance significantly. Low-frequency basis, i.e., Bessel basis of low order, is shown to be more important than high-frequency basis. Therefore, to select a fixed number of bases, we must include the low-order Bessel basis first. 

As shown in Table \ref{tab:ablation_3}, the conventional scale-equivariant CNN architecture (using ES sampling strategy with Sample Number 4 and Width Scaling 1) has a decent prediction error. However, the computational burden is extremely high. When we try to reduce the computational burden by decreasing the width of the network, the number of trainable parameters is reduced significantly, which leads to poorer prediction performance. The MCG-CNN also has a heavy computational burden and is superior to its corresponding G-CNN when we use a larger number of transformations and more transformation types.

Among the tested ResNet baseline architectures, the results with ResNet18 give the lowest mean error, which indicates that the deeper models such as ResNet50 and ResNeXt50 suffer from overfitting because the number of classes is reduced from 1k to 40. However, the WMCG-CNN can reduce the over-fitting consistently for all the considered baseline models. WMCG-CNN versions of ResNet18 yield the best classification performance. Generally, the results on ImageNet40 demonstrate that, with suitable filter bases, WMCG-CNN is superior to standard CNN in sample efficiency, helps avoid overfitting, and enables quicker convergence.

\subsection{Comparison with the state-of-the-art parameter-sharing G-CNNs}

\subsubsection{Experimental setup}
We test all the group equivariant networks on three common small-scale datasets: Rotated-Scaled-and-Sheared MNIST (RSS-MNIST), CIFAR10 \cite{krizhevsky2009learning}, and STL10\cite{coates2011analysis}. The original MNIST dataset \cite{lecun1998gradient} consists of $\num[group-separator={,}]{70000}$ $28 \times 28$ images of handwritten digits.
Similar to \cite{gao2021deformation}, RSS-MNIST is constructed through randomly rotating (by an angle uniformly distributed on $[0, 2\pi]$), shearing (by an angle uniformly distributed on $[-\pi/4,\pi/4)$ as well as rescaling (by a uniformly random factor from [0.3, 1]) the original MNIST \cite{lecun1998gradient} images. The transformed images are zero-padded back to a size of $28 \times 28$. We upscale the image to $56 \times 56$ for better comparison of the models.
The CIFAR-10 dataset consists of color images of size $32\times 32\times 3$ for 10 classes. There are $\num[group-separator={,}]{50000}$ training images and $\num[group-separator={,}]{10000}$ testing images. 
The STL10 dataset consists of  $\num[group-separator={,}]{13000}$ RGB images of size 96 × 96 belonging to 10 different classes, including $\num[group-separator={,}]{5000}$ images for training set and $\num[group-separator={,}]{8000}$ for the test set.
Similar to \cite{gao2021deformation}, we evaluate different models under both in-distribution (ID) and out-of-distribution (OOD) settings. Specifically, the training set remains unchanged. The ID setting keeps the test set unchanged. The OOD setting augments the test set with uniform random affine transformations with scaling range $[0.9,1.1)$, rotation angle range $[0,0.1\pi)$, and shear transform angle range $[0,0.1\pi)$ along vertical and horizontal directions.

About experiments on RSS-MNIST dataset, following the test procedure for group equivariant networks described in \cite{marcos2018scale,ghosh2019scale,gao2021deformation}, we generate six independent realizations of augmented data. Each of them is split into three parts: $\num[group-separator={,}]{10000}$ images for training, $\num[group-separator={,}]{2000}$ for validation, and $\num[group-separator={,}]{50000}$ for testing.
Adam optimizer is used to train all models for $60$ epochs with the batch size set as $128$. The initial learning rate is $0.01$ and decreases tenfold after $30$ epochs.
We compare methods with the state-of-the-art parameter-sharing group equivariant network, attentive G-CNN\cite{romero2020attentive} and RST-CNN\cite{gao2021deformation}. The implementation of the proposed method uses the same filter basis as RST-CNN. Since the number of their basis is much limited, we use the bootstrap resampling method to obtain enough bases. 
The baseline network is a ResNeXt that is constructed by setting the number of bottleneck blocks of each stage in ResNeXt29\cite{xie2017aggregated,hendrycks2019augmix} as 2, 2, 4 with the group number 64, the group width 1 and the kernel size 15 for the non-$1\times 1$ convolutional kernels. This ResNeXt is called ResNeXt26. We further get ResNeXt32 by increasing the number of bolttleneck blocks of the third stage to 6.
	Accordingly, we get ResNeXt26-WMCG and ResNeXt32-WMCG by replacing all $15\times 15$ hidden convolutional layers of the corresponding ResNeXt with the proposed WMCG convolutional layers using bootstrap resampled $15\times 15$ bases, respectively.

For experiments on CIFAR10 dataset\cite{krizhevsky2009learning}, we use ResNeXt29 \cite{xie2017aggregated} as the baseline network. 
We use the augmented continuous MH wavelet filter as the basis for WMCG CNN. We denote "ResNeXt29-WMCG" as the network created by replacing the $3\times 3$ convolution layer with WMCG CNN of the $3\times 3$ MH basis size and each convolutional filter using $1$ basis. The filter basis is augmented with translation, scaling, and shear transforms. Empirically, only for the experiments with CIFAR10 dataset, we use scaling range $[1.0,1.5)$, while in other experiments we keep $[1.0,2.0)$.
We also include comparison results with attentive G-CNN\cite{romero2020attentive} and efficient group equivariant network\cite{he2021efficient} on CIFAR10 dataset. The networks from \cite{romero2020attentive} are trained in the same way as our methods. Since \cite{he2021efficient} does not publish their code, we simply refer to the results in their paper directly.
All the CNNs are trained with the same training strategy as in \cite{hendrycks2019augmix}. Specifically, all the networks are trained using an initial learning rate of $0.1$ and a cosine learning rate schedule. The optimizer uses stochastic gradient descent with Nesterov momentum and a weight decay of $0.0005$. The input images are first pre-augmented with standard random left-right flipping and cropping, and then the Augmix method\cite{hendrycks2019augmix} is applied with its default settings. Augmix uses affine transformations (including translation, rotation, and shear transforms) as well as other augmentation methods to generate augmented clean images. We repeat the training-and-testing experiments for six rounds and report the mean results $\pm$ standard deviation.

For experiments on STL10 dataset\cite{coates2011analysis}, all the CNNs are trained using an initial learning rate of $0.05$ and batch size 64 for $\num[group-separator={,}]{1000}$ epochs. Following \cite{gao2021deformation}, the optimizer uses stochastic gradient descent with Nesterov momentum and a weight decay of $0.0005$. The learning rate decays by $0.2$ at epoch 300, 400, 600 and 800, respectively. We repeat the experiments 6 times and report the mean results $\pm$ standard deviation. 
We compare methods with the state-of-the-art parameter-sharing group equivariant network, SESN\cite{sosnovik2019scale} and RST-CNN\cite{gao2021deformation} on STL10. The implementation of the proposed method uses the same filter basis as SESN\cite{sosnovik2019scale}. Since the number of their basis is much limited, we use the bootstrap resampling method to obtain enough bases. We use modified versions of ResNeXt50 as baseline networks. Specifically, the width of the bottleneck blocks is doubled. The hidden non-$1\times 1$ convolution layers have a kernel of size $7\times 7$ with the group number 64 and the group width 1. The last 3 bottleneck blocks are removed and the last fully connected layer is therefore adapted with the input channel number the same as that of the output of the nearest remaining bottleneck block. The output channel number of first convolutional layer is increased to 4 times, followed by a $1\times 1$ convolution layer for adaptation of the output channel number. The first pooling layer is removed. We denote the modified ResNeXt50 as "ResNeXt41". Finally, we get "ResNeXt41-WMCG" by replacing all $7\times 7$ hidden convolutional layers of ResNeXt41 with the proposed WMCG convolutional layers using bootstrap resampled $7\times 7$ bases.

\subsubsection{Experimental results}
Table \ref{tab:para_gcnn} shows the results of compared G-CNNs on RSS-MNIST and CIFAR10. We see that the WMCG networks outperform the parameter-sharing networks with much less computational burden while not increasing the computational burden of standard CNNs. The MACs/Params results show that the proposed method has almost the same level of parameter-sharing as the standard CNNs. We also note that FB basis used to perform well on ImageNet datasets but obtained poor performance on CIFAR10 datasets. On the other hand, the wavelet basis which used to work poorly on ImageNet surpasses FB basis on CIFAR10.

\begin{table}[htbp]
	\centering
	\caption{The comparison results of image classification experiments with parameter-sharing group equivariant neural network models on MNIST and CIFAR10 datasets.}
	\centerline{\resizebox{9.0cm}{!}{ 
			\begin{threeparttable}[b]
				\begin{tabular}{l|cc|c|c} 
					\hline
					Model                  &        \multicolumn{4}{c}{RSS-MNIST}                              \\ 
					\hline
					& Params (M) &  MACs (G)  &   MACs/Params (G/M)    &         \multicolumn{1}{c}{Error (\%)}                          \\ 
					\hline
					mnist$\_$CNN$\_$56\cite{gao2021deformation}	   & 0.49   &   0.14    &           0.29               & \multicolumn{1}{c}{8.76$\pm$0.11}      \\ 
					romhog$\_$fa$\_$p4cnn\cite{romero2020attentive}	   & 0.02   &   0.02    &           0.63              & \multicolumn{1}{c}{8.07$\pm$0.17}      \\ 
					RST-CNN\cite{gao2021deformation}& 3.24   &     5.68   &       1.75         &      \multicolumn{1}{c}{4.96$\pm$0.10}   \\
					ResNeXt26    & 3.28 &     1.51    &      0.46          &      \multicolumn{1}{c}{5.02$\pm$0.22}   \\
					ResNeXt26-WMCG    & 3.02&     1.51    &     0.50           &      \multicolumn{1}{c}{4.86$\pm$0.24}   \\
					ResNeXt32    & 4.45 &     1.74   &      0.39         &      \multicolumn{1}{c}{5.03$\pm$0.34}   \\
					ResNeXt32-WMCG    & 4.11 &     1.74    &     0.42           &      \multicolumn{1}{c}{\textbf{4.59}$\pm$0.19}   \\
					\hline
					&             \multicolumn{4}{c}{CIFAR10}                           \\ 
					\hline
					& Params (M) &  MACs (G)   &    MACs/Params (G/M)    &  \multicolumn{1}{c}{Error (\%)}                               \\ 
					\hline
					ALL-CNN-$\alpha_F$-p4m\cite{romero2020attentive}& 1.25   &   2.92  &          2.34          & \multicolumn{1}{c}{6.61$\pm$1.44}           \\ 
					RESNET44-$\alpha_F$p4m\cite{romero2020attentive}& 2.70   &   3.45   &        1.28            & \multicolumn{1}{c}{5.86$\pm$0.34}           \\ 
					p4m-E4R18\cite{he2021efficient}  & 6.00   &  3.87    &     0.65                 & \multicolumn{1}{c}{4.96$\pm$0.16}           \\ 
					ResNeXt29\cite{xie2017aggregated}  & 6.81   &  1.08  &    0.16                 & \multicolumn{1}{c}{4.36$\pm$0.25}           \\ 
					ResNeXt29-WMCG & 4.74  &  1.08  &     0.23            & \multicolumn{1}{c}{\textbf{4.05}$\pm$0.16}  \\ 
					\hline
				\end{tabular}
	\end{threeparttable}}}
	\label{tab:para_gcnn}
\end{table}

Table \ref{tab:stl10} shows the accuracy results of STL10 under the ID and OOD settings. We find that SESN\cite{sosnovik2019scale} achieves the highest accuracy under the ID setting but performs poorly under the OOD setting. Meanwhile, our WMCG-CNN achieves competitive ID accuracy and the highest OOD accuracy with fewer learnable parameters and lower computational complexity, which demonstrates the good generalization ability of our method.

\begin{table*}[htbp]
	\centering
	\caption{The comparison results of image classification experiments with parameter-sharing group equivariant neural network models on STL10 dataset for both in-distribution (ID) and out-of-distribution (OOD) settings. The Training Time (s/epoch), Training VRAM Footprint (MB) and Inference Time (s) are tested on GPU Nvidia A100 80GB.}
	\centerline{\resizebox{18.0cm}{!}{ 
			\begin{threeparttable}[b]
				\begin{tabular}{l|cc|c|ccc|cc} 
					\hline
					& Params (M) &  MACs (G)  &   MACs/Params (G/M)    &  Training Time (s/epoch) & Training VRAM Footprint (MB) &  Inference Time (s) &       \multicolumn{1}{c}{ID Accuracy (\%)}       &         \multicolumn{1}{c}{OOD Accuracy (\%)}          \\ 
					\hline
					SESN\cite{sosnovik2019scale}	   & 10.96   &   227.02    &           28.47   &   89   &     30440   & 48   & \multicolumn{1}{c}{\textbf{91.22}$\pm$0.24} &  82.30$\pm$0.41  \\ 
					RST-CNN\cite{gao2021deformation}& 9.74   &     850.23   &       106.62       &  437    &   79248    &  184 &   \multicolumn{1}{c}{88.81$\pm$0.12}  & 79.35$\pm$0.53 \\
					ResNeXt41    & 8.15 &     10.05    &         1.26       &   27  &   21886   &  11   & \multicolumn{1}{c}{87.93$\pm$0.64} & 81.13$\pm$0.82\\
					ResNeXt41-WMCG    & 7.97 &     10.05    &     1.28           &   31  &    23184  &   11  & \multicolumn{1}{c}{90.78$\pm$0.59} & \textbf{83.71}$\pm$0.45\\
					\hline
				\end{tabular}
	\end{threeparttable}}}
	\label{tab:stl10}
\end{table*}	

\subsection{Image classification experiments on ImageNet benchmark datasets}

\subsubsection{Experimental setup}
In this section, we test the proposed method on ImageNet1k datasets. In addition, we use ImageNet1k-C\cite{hendrycks2019benchmarking} validation datasets to test neural networks' robustness and generalizability against image corruptions, where 15 diverse corruption types\cite{hendrycks2019benchmarking} are included for both the ImageNet1k-C validation datasets. It should be noted that ImageNet1k-C is not included in the training set but serves as out-of-distribution data to test the models' robustness and generalizability.
Three kinds of training routines are used: For tests with ResNets, we have robust training strategies (Pixmix) with affine transform augmentation included,  and the corresponding controlled training routine without Pixmix augmentation; And the state-of-the-art full training strategy for tests with Swin\cite{liu2021swin} and ConvNeXt\cite{liu2022convnet}.

We use ResNeXt50\cite{xie2017aggregated} as the baseline network for the Pixmix-based\cite{hendrycks2022pixmix} robust training. We denote "ResNeXt50-WMCG" as the network created by replacing the $3\times 3$ convolution layer with WMCG-CNN of the $5\times 5$ FB basis size and each convolutional filter using $9$ bases. 
The neural networks are trained with the same strategy in Pixmix\cite{hendrycks2022pixmix}. All the neural networks are trained from scratch to compare the sample efficiency and convergence speed of different networks.

In addition, we test our methods with the recently proposed ConvNeXt network model\cite{liu2022convnet} on ImageNet40 and ImageNet1k datasets. We use ConvNeXt-S as the baseline network. We denote "ConvNeXt-S-WMCG" as the network created by replacing all the $7\times 7$ convolution layer with WMCG-CNN of the $7\times 7$ FB basis size and each convolutional filter using $49$ bases. The training on both datasets is in the same way as described in \cite{liu2022convnet}, where the neural networks are trained for 300 epochs using an AdamW optimizer. Similar to \cite{liu2022convnet}, the top 1 and top 5 accuracies are considered.

Although this paper focuses on equivariant CNNs, we note that the idea of "non-parameter-sharing" "weighted aggregation" is also adopted in other research works for image classification, such as dynamic convolutions\cite{chen2020dynamic}. Our methods differ from dynamic convolutions in three aspects: 1) Dynamic convolutions generate weights via an attention module using a feature tensor as input, where the generated weights change with the changes in feature tensors. Our methods use learnable parameters directly as weights (these weights are learned only from the training phase), and the learned weights remain unchanged during the inference phase for different feature tensors; 2) Dynamic convolutions aggregate multiple learnable kernels, while our methods aggregate multiple augmented filters; 3) Dynamic convolutions focus on increasing the representation capability of neural networks by introducing an attention mechanism to aggregate multiple learnable kernels, and the number of learnable parameters is increased. Meanwhile, our methods focus on efficiently improving the affine equivariance of CNNs without increasing computational complexity of inference; the number of learnable parameters is reduced or remains unchanged. To demonstrate the performance difference, we include the tests of dynamic convolutions in the following experiments, where we denote the dynamic convolution as "DY". And the hidden non-$1\times 1$ CNN layers of the base models are replaced with a corresponding dynamic convolution \cite{chen2020dynamic} layer of the same kernel size.

\subsubsection{Experimental results}
Table \ref{tab:cls} shows all the results for our image classification experiments. We see that with or without the robust training strategies (Pixmix), the proposed WMCG-CNNs reduce the classification errors on both clean and corrupted datasets while using the same or smaller number of parameters. 
It is also noted that a large filter size can help increase the classification precision and robustness of neural networks. 
The WMCG-CNN is good at exploiting large-size filters for boosting the performance.
As for the experiment with ConvNeXt, WMCG-CNN improves ConvNeXt-S on both ImageNet40 and ImageNet1k datasets without increasing the number of parameters and the computational burden. It is also noted that shear transform is also helpful for performance boost under the $300$-epoch full-training routine. Dynamic convolution works well with a kernel size of $3\times 3$, but performs poorly with a kernel size of $7\times 7$. Furthermore, it performs poorly on the ImageNet1k-C dataset, which implies poor generalization ability.

\begin{table*}[htbp]
	\centering
	\caption{The results of image classification experiments with CNN models on multiple benchmark datasets.}
	\centerline{\resizebox{10.0cm}{!}{ 
			\begin{threeparttable}[b]
				\begin{tabular}{l|cc|cc|cc} 
					\hline
					&     &    &    &   & \multicolumn{1}{c}{ImageNet1k} & ImageNet1k-C                           \\ 
					\hline
					With Pixmix & Method & Kernel Size & Params (M) &  MACs (G)   & \multicolumn{1}{c}{Error (\%)}       & mCE (\%)                                  \\ 
					\hline
					\multirow{3}{*}{ResNet50\cite{he2016deep}}  &  Vanilla &  $3\times 3$ & 25.56   &   4.12    & 25.78                         & \multicolumn{1}{c}{54.23}      \\ 
					 &  DY &  $3\times 3$   &  59.83  &  4.19    &       \textbf{24.66}             &   55.34    \\ 
					&  WMCG  &    $5\times 5$   & 25.56   &     7.41   & 25.26                &      \multicolumn{1}{c}{\textbf{53.04}}   \\
					\hline
					\multirow{3}{*}{ResNeXt50\cite{xie2017aggregated}} & Vanilla  &  $3\times 3$ & 25.03  &    4.27    & 23.27                         & \multicolumn{1}{c}{51.16}      \\ 
					 & DY &   $3\times 3$  &  30.55  & 4.31  &         \textbf{22.00}     &    51.57   \\ 
					 & WMCG   &  $5\times 5$ & 25.03 &     4.68    & 23.10                &      \multicolumn{1}{c}{\textbf{50.57}}   \\
					\hline
					Without Pixmix & Method & Kernel Size &Params (M) &  MACs (G)   & \multicolumn{1}{c}{Error (\%)}       & mCE (\%)                                  \\ 
					\hline
					\multirow{3}{*}{ResNet50\cite{he2016deep}}   & Vanilla &  $3\times 3$ & 25.56   &   4.12    & 23.52                         & \multicolumn{1}{c}{61.52}      \\ 
					& DY  &  $3\times 3$ &    59.83  &  4.19    &           \textbf{23.24}              &   62.05   \\ 
					& WMCG &  $5\times 5$  & 25.56   &     7.41   & 23.28                &      \multicolumn{1}{c}{\textbf{61.12}}   \\
					\hline
					\multirow{3}{*}{ResNeXt50\cite{xie2017aggregated}}  & Vanilla & $3\times 3$ & 25.03  &    4.27    & 22.13                &      \multicolumn{1}{c}{59.82}      \\ 
					 & DY  & $3\times 3$  &    30.55  & 4.31      &                   22.09                         & \multicolumn{1}{c}{60.36}     \\ 
					& WMCG  & $5\times 5$   & 25.03 &     4.68    & \textbf{22.01}                &      \multicolumn{1}{c}{\textbf{59.71}}   \\
					\hline
					&      &      & \multicolumn{2}{c}{ImageNet40}                            \\ 
					\hline
					&Method & Kernel Size & Params (M) &  MACs (G)   &   \multicolumn{1}{c}{Top-1 Acc. (\%)} &  \multicolumn{1}{c}{Top-5 Acc. (\%)}      \\ 
					\hline
					\multirow{3}{*}{ConvNeXt-S\cite{liu2022convnet}}  &   Vanilla &  $7\times 7$ & 50.22 &   8.70  & 84.75 & 96.45 \\
					& DY & $7\times 7$  & 53.70 & 8.71 & \textbf{86.85} & 94.50 \\
					 & WMCG  &  $7\times 7$ & 50.22 &   8.70   & 86.65 & \textbf{97.80} \\
					\hline
					&      &      & \multicolumn{2}{c}{ImageNet1k}                            \\ 
					\hline
					& Method & Kernel Size & Params (M) &  MACs (G)    & \multicolumn{1}{c}{Top-1 Acc. (\%)} &  \multicolumn{1}{c}{Top-5 Acc. (\%)}      \\ 
					\hline
					Swin-S\cite{liu2021swin} &  &   & 49.61 &   8.75  & 83.17 & 96.23 \\
					\hline
					\multirow{3}{*}{ConvNeXt-S\cite{liu2022convnet}}  &  Vanilla  &  $7\times 7$ & 50.22 &    8.70   & 83.14 & 96.43 \\
					& DY  & $7\times 7$ & 53.70 & 8.71  & 82.57 & 96.17 \\
					 & WMCG  &  $7\times 7$ & 50.22 &   8.70   &\textbf{83.24} & \textbf{96.49} \\
					\hline
				\end{tabular}
	\end{threeparttable}}}
	\label{tab:cls}
\end{table*}

\subsection{Image denoising experiments on simulation and real noisy image sets}
Although it has been shown that in certain cases with known noise levels, traditional algorithms can surpass CNNs in denoising quality\cite{Zhao8169017,Zhao8719026}, their processing speed is much slower than neural networks. Blind denoising with unknown noise levels is also a more practical scenario in the application. Thus, in this paper, we only test the neural networks' performance on blind denoising tasks. 

The experiments are divided into three parts: grayscale synthetic additive Gaussian noisy image denoising, color synthetic additive Gaussian noisy image denoising, and real-world color noisy image denoising (whose image noise is generated in the camera imaging process).

\subsubsection{Datasets}
For grayscale image denoising, as in \cite{zhang2017beyond}, the same $400$ $180\times 180$ images are used for training. These images are cropped from each of the selected 400 images of the Berkeley segmentation dataset\cite{martin2001database}. The Berkeley segmentation dataset includes $\num[group-separator={,}]{1000}$ representative $481\times 321$ RGB images from the Corel image database. The training images are corrupted by synthetic additive Gaussian noise of noise level (i.e., the standard deviation of noise) $\sigma \in [0,55]$. $128\times \num[group-separator={,}]{3000}$ patches of size $50\times 50$ are cropped to train the CNN model.
For color synthetic noisy image denoising, we follow \cite{tian2021designing}, where the same $400$ color images are augmented with bicubic downscaling, counterclockwise rotation, and horizontal flip.
As for real-world noisy images, as in \cite{tian2021designing}, the training dataset consists of $100$ $512\times 512$ JPEG images collected from five digital cameras Canon 80D, Nikon D800, Canon 600D, Sony A7 II and Canon 5D Mark II with an ISO of 800, \num[group-separator={,}]{1600}, \num[group-separator={,}]{3200}, \num[group-separator={,}]{6400}, \num[group-separator={,}]{12800} and \num[group-separator={,}]{25600}.

Five public test datasets are considered, including the grayscale image datasets Set12\cite{li2013benchmark}, BSD68\cite{li2013benchmark}, the color image datasets CBSD68\cite{li2013benchmark}, Kodak24\cite{franzen1999kodak}, and the public real noisy consumer camera image dataset CC\cite{nam2016holistic}. Set12 and BSD68 consist of 12 and 68 grayscale images derived from the Berkeley segmentation dataset\cite{martin2001database}, respectively. CBSD68 consists of 68 color images selected from the Berkeley segmentation dataset\cite{martin2001database}. The Kodak24 dataset consists of 24  lossless, true color (24 bits per pixel) $ 768\times 512$ images released by the Eastman Kodak Company. The public CC dataset consists of 15 images that are captured by three different digital cameras: Canon 5D Mark III, Nikon D600, and Nikon D800 with ISO values of \num[group-separator={,}]{1600}, \num[group-separator={,}]{3200}, or \num[group-separator={,}]{6400}. The training images are cropped into $41\times 41$ patches for training the networks.

\subsubsection{Experimental setup for grayscale image denoising}
We consider one of the most famous denoising CNNs, DnCNN-B\cite{6247952}\cite{zhang2017beyond} as the baseline network for experiments on gray-scale image denoising. We build a brand new denoising network called DnNeXt-B by replacing every plain hidden CNN layer in DnCNN-B with the bottleneck block shown in Fig. \ref{fig:shearfilter}(b). We further denote "DnNeXt-B-k5-WMCG" as the network created by replacing the hidden $3\times 3$ convolution layer in DnNeXt-B with WMCG-CNN of the $5\times 5$ FB basis size and each convolutional filter decomposed by $9$ bases. Likewise, "DnNeXt-B-k7-WMCG" is a corresponding version with FB basis of size $7\times 7$.
To emphasize the efficiency of our approach, we also include another wavelet-based denoising CNN, MWDCNN\cite{tian2023multi} for comparison. 
We test all the CNNs on the standard grayscale image datasets Set12\cite{li2013benchmark}, and BSD68\cite{li2013benchmark}. 
The DnCNN, DnNeXt, and DnNeXt-WMCG are trained with the same training strategy as in \cite{zhang2017beyond}.
We use the SGD optimizer with a weight decay of $0.0001$, and a momentum of $0.9$. The networks are trained for 50 epochs with a batch size of 128. During the 50 epochs of training, the learning rate decreases exponentially from $1.0\times 10^{-1}$ to $1.0\times 10^{-4}$.

\subsubsection{Experimental results for grayscale image denoising}
Table \ref{tab:noisysim} shows the denoising results with the metric of peak signal-to-noise ratio (PSNR) on images corrupted by simulated white Gaussian noise of different noise levels. The number of trainable parameters and MACs are also displayed. In particular, for all the calculations of MACs in image-denoising experiments, we assume the input patch size is $3\times 32\times 32$ for a fair comparison of computational burden, which differs from the actual case.
We find that the proposed DnNeXt and DnNeXt-MCG outperform DnCNN and MWDCNN with a much smaller number of learnable parameters. In addition, the proposed DnNeXt-WMCG achieves the highest average PSNR of all CNNs and yields especially higher PSNR on high noise levels. The larger FB basis helps gain a higher PSNR score on high noise levels, yet may cause poor performance on low noise levels. 

\subsubsection{Experimental setup for color image denoisng}
We consider DudeNet\cite{tian2021designing}, an upgrading of DnCNN as the baseline CNN for the synthetic color noisy image denoising and real camera image denoising experiment. We build a new network DudeNeXt by replacing every plain hidden $3\times 3$ CNN layer in DudeNet with the bottleneck block shown in Fig. \ref{fig:shearfilter}(b). We further denote "DudeNeXt-k5-WMCG" as the network created by replacing the hidden $3\times 3$ convolution layer in DudeNeXt with WMCG-CNN of the $5\times 5$ FB basis size and each convolutional filter decomposed by $9$ bases. We follow the same training strategy as in \cite{tian2021designing}. We use the Adam optimizer with an initial learning rate of $1.0\times 10^{-3}$ and a batch size of 128. The networks are trained for 70 epochs. During the 70 epochs of training, the learning rate decreases exponentially from $1.0\times 10^{-3}$ to $1.0\times 10^{-5}$. On the synthetic color noisy dataset, we compare our methods with two conventional denoising algorithms, CBM3D \cite{dabov2007image}, TID\cite{luo2015adaptive}, as well as three deep learning methods DnCNN\cite{zhang2017beyond}, DudeNet\cite{tian2021designing}, and MWDCNN\cite{tian2023multi}. 
On the real noisy image set CC, we additionally include two transformer-inspired networks for comparison, i.e., Restormer\cite{zamir2022restormer} and NAFNet-width64\cite{chen2022simple}. For a fair comparison, we keep using the same training set, batch size, and epoch number. For parameter stabilization, these two networks use their best optimizer setting, respectively. Specifically, as in \cite{zamir2022restormer}, Restormer uses the AdamW optimizer with an initial learning rate of 3e-4, betas of (0.9, 0.999), weight decay of 1e-4, and the cosine annealing learning rate scheduler. As in \cite{chen2022simple}, NAFNet uses AdamW optimizer with an initial learning rate of 1e-3, betas of (0.9, 0.9), weight decay of 0, and the cosine annealing learning rate scheduler.

\subsubsection{Experimental results for color image denoising}
Table \ref{tab:colornoisysim} shows the results on the public CBSD68 and Kodak24 color image datasets. Table \ref{tab:real_noisy} shows the results on the public CC dataset. The average PSNR, the parameter number, and MACs are displayed. Fig. \ref{fig:cc_d800iso1600} and Fig. \ref{fig:cc_d800iso6400} show the visual results on the CC dataset. On both synthetic and real-world color image denoising experiments, the proposed networks achieve superior performance regarding the average PSNR. DudeNeXt-WMCG gives a competitive visual performance with a lightweight architecture. The performance of Restormer and NAFNet may be further improved with more advanced training and pretraining techniques as well as a larger iteration number. Yet our focus is to test the methods' data efficiency and parameter efficiency. We see that the proposed method helps achieve a lightweight denoising solution by improving DudeNeXt's performance without increasing the computational burden, which uses far fewer parameters than transformer-inspired networks.
In addition, regarding dynamic convolutions, they introduce more learnable parameters and exhibit inferior denoising performance compared to our method. Similar to the results for image classification, dynamic convolutions work well with a kernel size of $3\times 3$ but perform poorly with a kernel size of $5\times 5$.

\begin{figure}[t]
	\centering
	\includegraphics[width=0.9\linewidth]{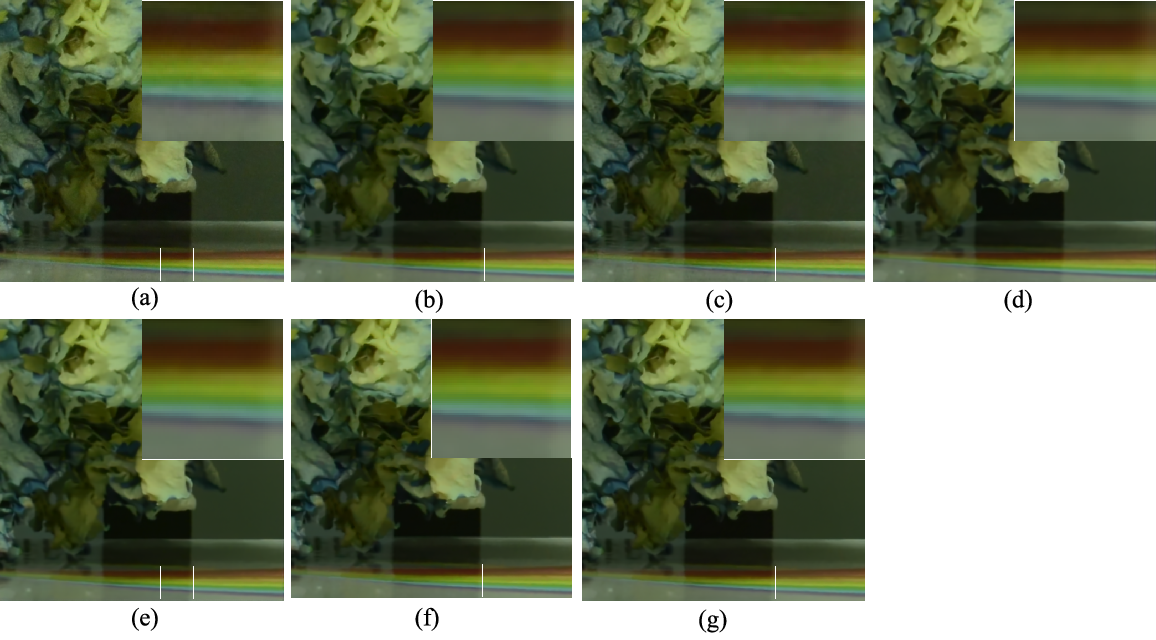}		
	\caption{ Denoising results of a real noisy image by Nikon D800 ISO1600 from CC. (a) noisy image; (b) Dude; (c) MWDCNN; (d) Restormer; (e) NAFNet; (f) DudeNeXt; (g) DudeNeXt-B-k5-WMCG-nb9.}
	\label{fig:cc_d800iso1600}
\end{figure}

\begin{figure}[t]
	\centering
	\includegraphics[width=0.9\linewidth]{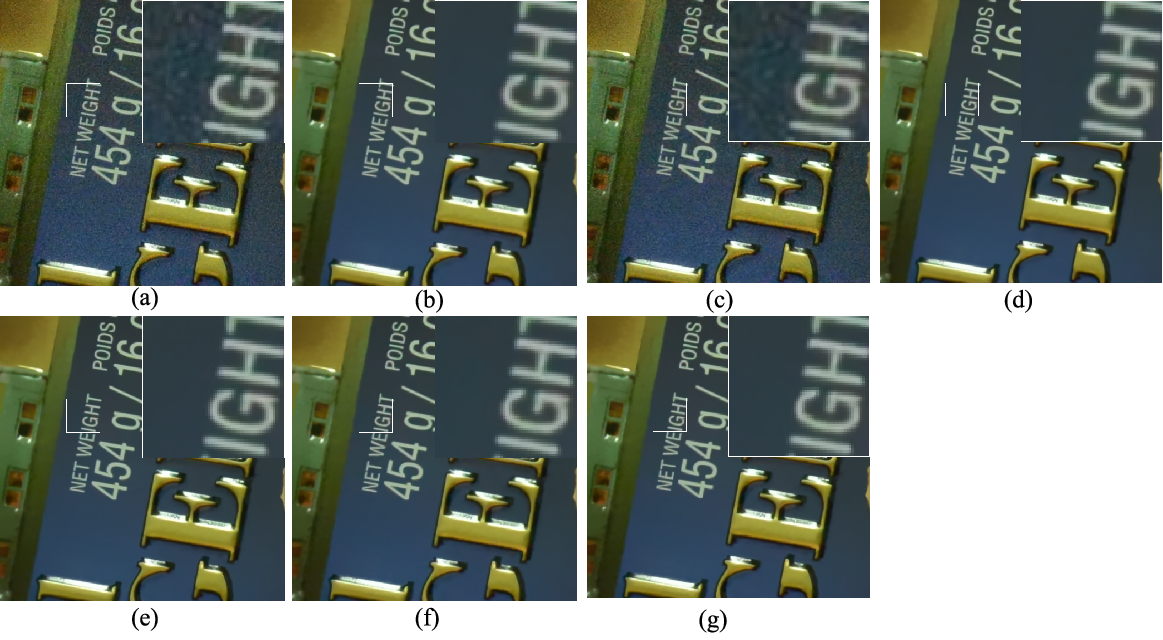}		
	\caption{ Denoising results of a real noisy image by Nikon D800 ISO6400 from CC. (a) noisy image; (b) Dude; (c) MWDCNN; (d) Restormer; (e) NAFNet; (f) DudeNeXt; (g) DudeNeXt-B-k5-WMCG-nb9.}
	\label{fig:cc_d800iso6400}
\end{figure}

\begin{table*}[htbp]
	\centering
	\caption{The average PSNR (dB) of different methods on the grayscale image datasets Set12 and BSD68 with different noise levels $\sigma$ from 15 to 50.}
	\centerline{\resizebox{14.0cm}{!}{ 
			\begin{threeparttable}[b]
				\begin{tabular}{l|cc|cc|cccc|cccc|c} 
					\hline
					& Method & Kernel Size & Params (M)   &  MACs (G)  & \multicolumn{4}{c|}{Set12}                                        & \multicolumn{4}{c|}{BSD68}  &   Average                                                      \\ 
					\hline
					$\sigma$ &     &   &   &     & 15             & 25             & 35             & 50             & 15             & 25             & 35             & 50             &         \\ 
					\hline
					MWDCNN-B\cite{tian2023multi}  & & & 5.24  &  3.75     & 32.60          & 30.39          &                & 27.23          & 31.39          & 29.16          &                & 26.20          &                 \\ 
					DnCNN-B\cite{zhang2017beyond}  & & & 0.67  &    0.68    & 32.70          & 30.35          & 28.78          & 27.13          & 31.60          & 29.14          & 27.65          & 26.19          & 29.19           \\ 
					\hline
					\multirow{3}{*}{DnNeXt-B}       &  Vanilla  &  $3\times 3$       & 0.64   &    0.66     & \textbf{32.76} & 30.38          & 28.86          & 27.18          & \textbf{31.65} & \textbf{29.18} & 27.70          & 26.24          & 29.24           \\ 
					& WMCG  &    $5\times 5$    & 0.64  &    1.26     & 32.74          & \textbf{30.41} & 28.89          & 27.30          & 31.56          & 29.16          & 27.72          & 26.31          & \textbf{29.26}  \\ 
					& WMCG   &    $7\times 7$   & 0.64  &   2.17      & 32.57          & 30.39          & \textbf{28.96} & \textbf{27.37} & 31.21          & 29.06          & \textbf{27.74} & \textbf{26.33} & 29.20           \\
					\hline
				\end{tabular}
	\end{threeparttable}}}
	\label{tab:noisysim}
\end{table*}	

\begin{table*}[htbp]
	\centering
	\caption{The average PSNR (dB) of different methods on the color image datasets CBSD68 and Kodak24  with different noise levels $\sigma$ from 15 to 50.}
	\centerline{\resizebox{14.0cm}{!}{ 
			\begin{threeparttable}[b]
				\begin{tabular}{l|cc|cc|cccc|cccc|c} 
					\hline
					& Method & Kernel Size &Params (M)  &    MACs (G)     & \multicolumn{4}{c|}{CBSD68}                                       & \multicolumn{4}{c|}{Kodak24}              & Average                                \\ 
					\hline
					$\sigma$  &   &  &   &     & 15             & 25             & 35             & 50             & 15             & 25             & 35                    & 50     &         \\ 
					\hline
					CBM3D \cite{dabov2007image} &  &   &    &    & 33.52          & 30.71          & 28.89          & 27.38          & 34.28          & 31.68          & 29.90                 & 28.46  &  30.60       \\ 
					DnCNN \cite{zhang2017beyond} &  &   &  0.56 &   0.57   & 33.98          & 31.31          & 29.65          & 28.01          & 34.73          & 32.23          & 30.64                 & 29.02   &    31.20     \\ 
					FFDNet\cite{zhang2018ffdnet}  &    &    &    0.85  &  0.22  & 33.80          & 31.18          & 29.57          & 27.96          & 34.55          & 32.11          & 30.56                 & 28.99    &   31.09    \\ 
					DudeNet \cite{tian2021designing} &   &    &  1.08  &  1.11   & 34.01          & 31.34          & 29.71          & 28.09          & 34.81          & 32.26          & 30.69                 & 29.10   &  31.25    \\ 
					MWDCNN \cite{tian2023multi} &    &   &  5.25  &   3.76   & 34.18          & 31.45          & 29.81          & 28.13          & 34.91          & 32.40          & 30.87                 & 29.26    &  31.38  \\ 
					MWDCNN-B \cite{tian2023multi} &  &  &   5.25  &   3.76    & 34.10          & 31.44          & 29.80          & 28.15          & 34.83          & 32.39          & 30.83                 & 29.23    &  31.35  \\ 
					DudeNet-B \cite{tian2021designing}  &  &  &  1.08  &  1.11     & 33.96          & 31.32          & 29.69          & 28.05          & 34.71          & 32.23          & 30.66                 & 29.05    &  31.21  \\ 
					\hline
					\multirow{2}{*}{DudeNeXt-B}   &  Vanilla &  $3\times 3$ &    1.07  &    1.04   & 34.15          & 31.46          & 29.80          & 28.12          & 34.90          & 32.40          & 30.81                 & 29.17    &  31.35  \\ 
					 & WMCG & $5\times 5$ &  1.07  &   2.04    & \textbf{34.19} & \textbf{31.53} & \textbf{29.90} & \textbf{28.27} & \textbf{34.96} & \textbf{32.49} & \textbf{30.94}        & \textbf{29.35}  &  \textbf{31.45} \\
					\hline
				\end{tabular}
	\end{threeparttable}}}
	\label{tab:colornoisysim}
\end{table*}

\begin{table*}[htbp]
	\centering
	\caption{The PSNR (dB) of different methods on the real-world noisy image dataset CC\cite{nam2016holistic} by customer cameras.}
	\centerline{\resizebox{19.0cm}{!}{ 
			\begin{threeparttable}[b]
\begin{tabular}{l|cc|ccc|ccc|ccc|ccc|ccc|c|cc}
	\hline
	& Method & Kernel Size                                   & \multicolumn{3}{c|}{Canon 5D ISO3200}             & \multicolumn{3}{c|}{Nikon D600 ISO3200}           & \multicolumn{3}{c|}{Nikon D800 ISO1600}           & \multicolumn{3}{c|}{Nikon D800 ISO3200}           & \multicolumn{3}{c|}{Nikon D800 ISO6400}           & Average        & Params (M) & MACs (G) \\\hline
	CBM3D \cite{dabov2007image}    &   &    & \textbf{39.76} & 36.40          & \textbf{36.37} & 34.18          & 35.07          & 37.13          & 36.81          & 37.76          & 37.51          & 35.05          & 34.07          & 34.42          & 31.13          & 31.22          & 30.97          & 35.19          &            &          \\
	TID\cite{luo2015adaptive}      &   &        & 37.22          & 34.54          & 34.25          & 32.99          & 34.20          & 35.58          & 34.49          & 35.19          & 35.26          & 33.70          & 31.04          & 33.07          & 29.40          & 29.86          & 29.21          & 33.36          &            &          \\
	DnCNN\cite{zhang2017beyond}      &   &      & 37.26          & 34.13          & 34.09          & 33.62          & 34.48          & 35.41          & 37.95          & 36.08          & 35.48          & 34.08          & 33.70          & 33.31          & 29.83          & 30.55          & 30.09          & 33.86          & 0.56       & 0.57     \\
	DudeNet\cite{tian2021designing}   &   &     & 36.66          & 36.70          & 35.03          & 33.72          & 34.70          & 37.98          & 38.10          & \textbf{39.15} & 36.14          & 36.93          & 35.80          & 37.49          & 31.94          & 32.51          & \textbf{32.91} & 35.72          & 1.08       & 1.11     \\
	MWDCNN\cite{tian2023multi}      &   &       & 36.97          & 36.01          & 34.80          & 33.91          & 34.88          & 37.02          & 37.93          & 37.49          & \textbf{38.44} & 37.10          & \textbf{36.72} & 37.25          & 32.24          & 32.56          & 32.76          & 35.74          & 5.25       & 3.76     \\
	Restormer\cite{zamir2022restormer}    &   &     & 35.71          & 35.97          & 34.71          & 33.91          & 35.36          & 38.58          & 37.69          & 37.64          & 36.18          & \textbf{37.96} & 35.88          & 37.84          & 32.98          & 32.60          & 32.85          & 35.72          & 26.11      & 3.41     \\
	NAFNet\cite{chen2022simple}     &   &       & 35.99          & 36.96          & 35.16          & 32.81          & 35.01          & \textbf{39.00} & 37.29          & 38.65          & 36.93          & 38.34          & 36.33          & \textbf{38.43} & 32.31          & 32.67          & 32.22          & 35.87          & 115.98     & 1.01     \\
	\hline
	\multirow{5}{*}{DudeNeXt}                 &   Vanilla   &           $3\times 3$                   & 36.69          & 36.45          & 35.00          & 33.67          & 34.52          & 37.78          & 38.20          & 38.51          & 37.07          & 37.24          & 36.29          & 37.80          & 32.32          & 32.19          & 32.55          & 35.75          & 1.07       & 1.04     \\
	   &            Vanilla          &       $5\times 5$        & 36.65          & 37.02          & 35.22          & 34.05          & 34.56          & 38.58          & 38.12          & 38.58          & 36.80          & 36.95          & 35.83          & 37.17          & 32.34          & 32.62          & 32.81          & 35.80          & 1.99       & 2.04     \\
	    &    DY      &    $3\times 3$     & 36.00          & 36.82          & 35.40          & 33.65          & \textbf{36.55} & 37.74          & 38.10          & 38.20          & 36.14          & 37.71          & 36.51          & 38.27          & 31.94          & \textbf{32.78} & 32.91          & 35.91          & 2.77       & 1.10     \\
	&         DY        &        $5\times 5$       & 36.49          & 34.01          & 33.62          & 33.86          & 34.33          & 35.36          & 35.18          & 35.22          & 34.66          & 33.43          & 32.62          & 32.89          & 29.76          & 30.13          & 30.05          & 33.44          & 6.57       & 2.07     \\
	&     WMCG      &   $5\times 5$     & 36.91          & \textbf{37.11} & 35.15          & \textbf{34.46} & 35.64          & 38.89          & \textbf{38.30} & 38.81          & 37.30          & 37.72          & 35.99          & 37.76          & \textbf{32.36} & 32.77          & 32.90          & \textbf{36.14} & 1.07       & 2.04    \\\hline
\end{tabular}
	\end{threeparttable}}}
	\label{tab:real_noisy}
\end{table*}

\subsection{Analysis and discussion}

The ablation experiments on ImageNet40 demonstrate the sample efficiency of WMCG-CNN for all the tested baseline network architectures, including ResNet18, ResNet50, and ResNeXt50. We note that the proposed method gives a larger reduction in the Error for ResNet18 and ResNet50 than that for ResNeXt50. This is probably because a larger proportion of learnable parameters in ResNeXt50 lies in $1\times 1$ Conv layers which, as shown in the results, causes heavy overfitting on the small dataset ImageNet40.

The comparison experiments with discrete G-CNN, MCG-CNN, and WMCG-CNN on ImageNet prove that the diversity of transformations is helpful for a performance boost. 
Introducing MC sampling allows us to consider any mix of affine transforms. In the experiments on ImageNet40, we see that the additional use of shear transform with a suitable shear range can consistently improve image classification. Meanwhile, a high degree of shear transform can harm the performance because, in discrete implementation, shear transform leads to compression of information along a certain direction that causes information loss.

As we know a rotation transform can be decomposed into three shear transforms\cite{andres1996quasi} as shown in equation \eqref{eq:3shear}, by combining one shear transform and one rotation transform, we can get all the possible rotation and shear transform (along horizontal or vertical direction), which thereby greatly increase the diversity of the considered transforms. In addition, shear transforms are common in daily life (as shown in Fig. \ref{fig:shear_cbsd}). Obviously, a good machine learning model for classification and regression tasks usually should consider as many such kinds of group equivariance as possible.  
\begin{equation}
	\begin{array}{l}
		R(\theta) = 
		\begin{bmatrix}
			\cos{\theta} & \sin{\theta}\\
			-\sin{\theta} & \cos{\theta}
		\end{bmatrix}
		=  \begin{bmatrix}
			1 & -\tan{\frac{\theta}{2}}\\
			0 & 1
		\end{bmatrix}\\
		\cdot 
		\begin{bmatrix}
			1 & 0\\
			\sin{\theta} & 1
		\end{bmatrix}
		\cdot
		\begin{bmatrix}
			1 & -\tan{\frac{\theta}{2}}\\
			0 & 1
		\end{bmatrix}
	\end{array}
	\label{eq:3shear}
\end{equation}	
On the other hand, the shear-transform-augmented convolutional filters can be considered as an example of the classic continuous shear wavelet\cite{antoine1993image,guo2006sparse}. The shear wavelet can help achieve a highly sparse representation of multidimensional data\cite{guo2006sparse}, which also explains the superior performance it brings to the proposed WMCG-CNN. 

\begin{figure}[t]
	\centering
	\includegraphics[width=0.9\linewidth]{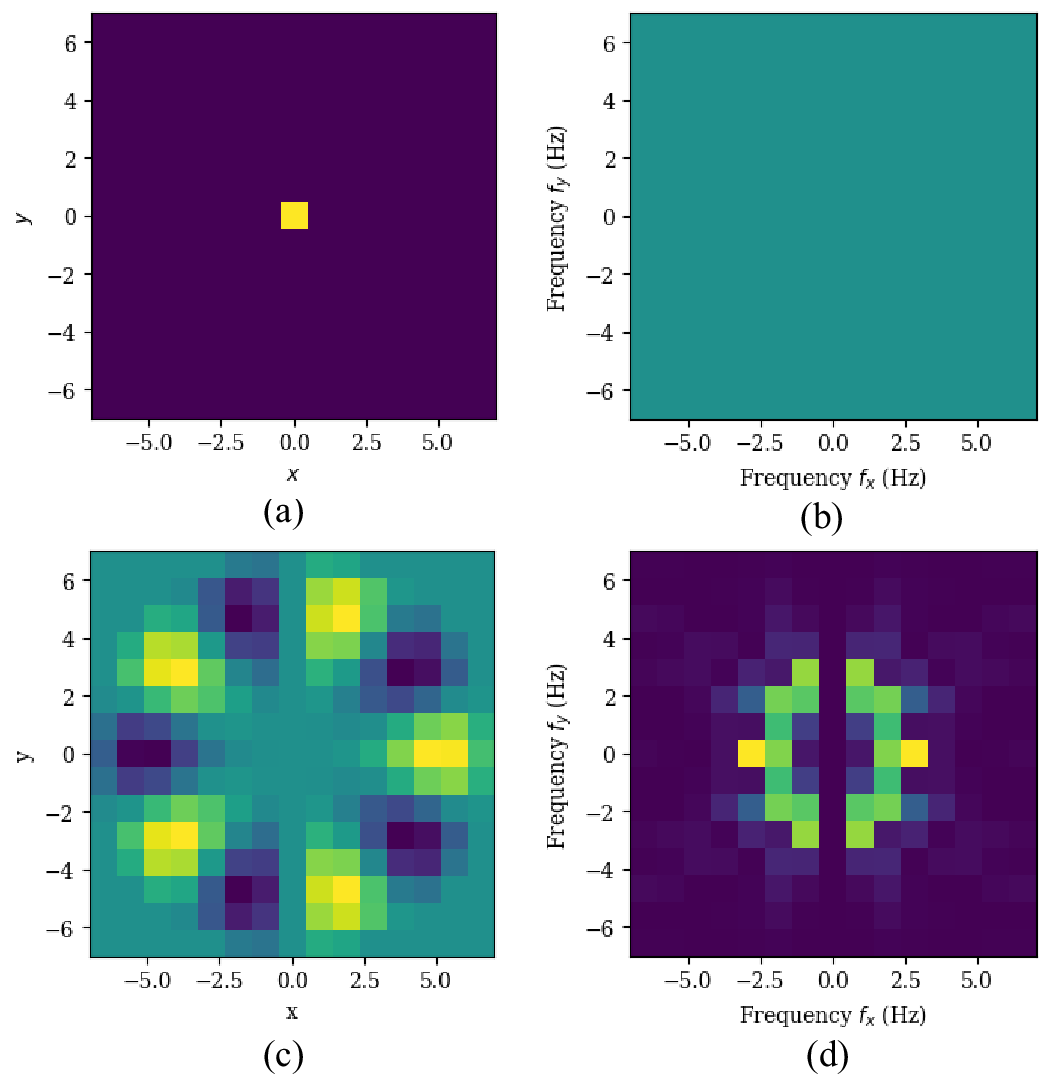}		
	\caption{ The FFT (fast Fourier transform) frequency spectrum of different filter bases. The filter bases are of size $15\times 15$ (a) The discrete Dirac delta basis; (b) The amplitude of the FFT frequency spectrum for (a) ; (c) A typical Fourier-Bessel basis; (d) The amplitude of the FFT frequency spectrum for (c).}
	\label{fig:spectrum}
\end{figure}

For the image denoising task, we did not manage to improve CNN's performance against Dirac delta basis on SIDD datasets\cite{abdelhamed2018high}. 
This is likely because, as shown in \cite{zamir2022restormer,chen2022simple}, compared with results on CC dataset, the denoised images of SIDD dataset are much closer to noise-free images, which means they are less blurred and contains more high frequency information including clearer edges. Meanwhile, as shown in Fig. \ref{fig:spectrum}, the chosen filter bases are all low-pass filters. As suggested by \cite{qiu2018dcfnet,gao2021deformation}, the low-frequency basis can boost the stability of the convolutional kernel. However, compared with the Dirac-delta basis, the low-pass filter bases are inferior in capturing and representing high-frequency information such as edges. This explains the inferior performance of the chosen filter bases on SIDD dataset. A possible solution is that we can combine the non-Dirac-delta basis with the Dirac-delta basis into a certain architecture to achieve superior performance. Since it is not the focus of this paper, we will not discuss it here.

We also note that in MC integral and stochastic simulation, there are a lot of advanced techniques such as quasi-MC sampling\cite{caflisch1998monte}, Markov chain MC\cite{cowles1996markov}, and multi-level MC\cite{giles2015multilevel}. There is a potential that these methods can help improve both MCG-CNN and WMCG-CNN further, and we will study this in future work.

The proposed WMCG-CNN shows higher flexibility and controllability than conventional CNNs. The use of filter decomposition decouples the relationship between the filter size and the number of trainable parameters. For a certain convolutional kernel, the corresponding number of trainable parameters can be as small as only 1, or as large as any integer. In addition, we can choose a certain custom design basis as one likes to control the performance of the network. For example, in the experiment on the CIFAR10 dataset, we simply choose a single low-frequency wavelet basis and can still get a good result on the CIFAR10 dataset. 
It is noted that the choice of filter basis can greatly affect the performance of WMCG-CNN. For different datasets, the optimal basis type and basis size can be different.

\section{Conclusion}
In this paper, we propose an efficient and flexible implementation of group-equivariant CNN based on filter-wise weighted Monte Carlo sampling, which allows a higher degree of diversity of transformations for a performance boost. Compared with parameter-sharing G-CNN, with a suitable filter basis, the proposed non-parameter-sharing WMCG-CNN can exploit deeper neural network architectures without causing heavy computational burden and achieves superior performance. The proposed WMCG-CNN is shown to be an efficient generalization of standard CNN. The utility of diverse transformations for tasks on natural images is demonstrated. The proposed WMCG-CNN shows superior efficiency on both image classification and image denoising tasks when using a suitable set of filter bases. It is possible to extend it for other computer vision tasks such as image segmentation and image reconstruction. However, the choice of filter basis is a key point for yielding good performance with the proposed method, which will be studied in future work.

\section*{Acknowledgments}
This work was supported by the Deutsche Forschungsgemeinschaft (DFG) under grant no. 428149221, by Deutsches Zentrum für Luft- und Raumfahrt e.V. (DLR), Germany under grant no. 01ZZ2105A and no. 01KD2214, and by Fraunhofer Gesellschaft
e.V. under grant no. 017-100240/B7-aneg. The authors would like to thank Ulrike Attenberger for her contribution to funding acquisition.

\end{document}